\documentclass{article} % For LaTeX2e
\usepackage{iclr2026_conference, times}

% Optional math commands from https://github.com/goodfeli/dlbook_notation.
% \input{math_commands.tex}

\usepackage{hyperref}
\usepackage{url}

% CUSTOM PACKAGES

\usepackage{amsmath}
\usepackage{amssymb}
\usepackage{mathtools}
\usepackage{amsthm}
\usepackage{amsfonts}
\usepackage{enumerate}
\usepackage{dsfont}

\usepackage{float}

\usepackage[capitalise]{cleveref}

\theoremstyle{plain}
\newtheorem{theorem}{Theorem}[section]

\newtheorem{lemma}[theorem]{Lemma}

\theoremstyle{definition}
\newtheorem{definition}[theorem]{Definition}
\newtheorem{assumption}[theorem]{Assumption}
\newtheorem{example}[theorem]{Example}
\theoremstyle{remark}
\newtheorem{remark}[theorem]{Remark}

% CUSTOM COMMANDS

\newcommand{\1}{\mathds{1}} % indicator function

\newcommand{\CH}[1]{\text{CH}(#1)}
\newcommand{\diam}[1]{\text{diam}(#1)}
\newcommand{\E}[2]{\mathbb{E}^{#1}\left[#2\right]}
\newcommand{\Var}[2]{\mathrm{Var}^{#1}\left(#2\right)}
\newcommand{\given}{\: | \:}
\newcommand{\GNF}{G\text{-}\text{NF}}
\newcommand{\GTV}[2]{d_{G\text{-}TV}(#1,#2)}
\newcommand{\HyperCpl}{\text{HyperCpl}}
\newcommand{\IncrMLP}{\text{IncrMLP}}
\newcommand{\indep}{\perp \!\!\! \perp}
\newcommand{\interventions}[1]{\mathcal{I}(\mu)}
\newcommand{\KL}[2]{\mathcal{D}_{KL}(#1 \given #2)}

\newcommand{\MLP}{\text{MLP}}
\newcommand{\Nb}{\mathbb{N}}
\newcommand{\PA}[1]{\text{PA}(#1)}
\newcommand{\push}[2]{{#1}_{\#} #2}
\newcommand{\Ra}[1]{\text{Ra}(#1)}
\newcommand{\sign}[1]{\text{sgn}(#1)}
\newcommand{\supp}[1]{\mathrm{supp}(#1)}
\newcommand{\st}{\: | \:}
 % set of target nodes indices
\newcommand{\TV}[2]{d_{TV}(#1,#2)}

\newcommand{\Pb}{\mathbb{P}}
\newcommand{\Rb}{\mathbb{R}}
\newcommand{\Rbpos}{\mathbb{R}_{>0}}

\newcommand{\Fc}{\mathcal{F}}

\newcommand{\Nc}{\mathcal{N}}
\newcommand{\Pc}{\mathcal{P}}

\newcommand{\Uc}{\mathcal{U}}
\newcommand{\Vc}{\mathcal{V}}
\newcommand{\Xc}{\mathcal{X}}

\DeclareMathOperator*{\argmin}{arg\,min}

\title{\centering Robust Optimization in Causal Models and \\ $G$-Causal Normalizing Flows}

% Authors must not appear in the submitted version. They should be hidden
% as long as the \iclrfinalcopy macro remains commented out below.
% Non-anonymous submissions will be rejected without review.

\author{Gabriele Visentin \\
Department of Mathematics\\
ETH Zurich\\
Zurich, Switzerland \\
\texttt{gabriele.visentin@math.ethz.ch} \\
\And
Patrick Cheridito \\
Department of Mathematics\\
ETH Zurich\\
Zurich, Switzerland \\
\texttt{patrick.cheridito@math.ethz.ch} \\
}

% The \author macro works with any number of authors. There are two commands
% used to separate the names and addresses of multiple authors: \And and \AND.
%
% Using \And between authors leaves it to \LaTeX{} to determine where to break
% the lines. Using \AND forces a linebreak at that point. So, if \LaTeX{}
% puts 3 of 4 authors names on the first line, and the last on the second
% line, try using \AND instead of \And before the third author name.

\iclrfinalcopy % Uncomment for camera-ready version, but NOT for submission.
\begin{document}

%%% USEFUL LATEX FROM SAMPLE

% \begin{figure}[H]
% \begin{center}
% %\framebox[4.0in]{$\;$}
% %\fbox{\rule[-.5cm]{0cm}{4cm} \rule[-.5cm]{4cm}{0cm}}
% \includegraphics[width=0.8\linewidth]{myfile.pdf}
% \end{center}
% \caption{Sample figure caption.}
% \end{figure}

% \begin{table}[t]
% \caption{Sample table title}
% \label{sample-table}
% \begin{center}
% \begin{tabular}{ll}
% \multicolumn{1}{c}{\bf PART}  &\multicolumn{1}{c}{\bf DESCRIPTION}
% \\ \hline \\
% Dendrite         &Input terminal \\
% Axon             &Output terminal \\
% Soma             &Cell body (contains cell nucleus) \\
% \end{tabular}
% \end{center}
% \end{table}

\maketitle

\begin{abstract}
In this paper, we show that interventionally robust optimization problems in causal models are continuous under the $G$-causal Wasserstein distance, but may be discontinuous under the standard Wasserstein distance. This highlights the importance of using generative models that respect the causal structure when augmenting data for such tasks. To this end, we propose a new normalizing flow architecture that satisfies a universal approximation property for causal structural models and can be efficiently trained to minimize the $G$-causal Wasserstein distance. Empirically, we demonstrate that our model outperforms standard (non-causal) generative models in data augmentation for causal regression and mean-variance portfolio optimization in causal factor models.
\end{abstract}

\section{Introduction}

Solving optimization problems often requires generative data augmentation \citep{chen2024comprehensive, zheng2023toward}, particularly when out-of-sample distributional shifts are expected to be frequent and severe, as in the case of financial applications. In such cases, only the most recent data points are representative enough to be used in solving downstream tasks (such as hedging, regression or portfolio selection), resulting in small datasets that require generative data augmentation to avoid overfitting \citep{bailey2017probability}. However, when using generative models for data augmentation, it is essential to choose their training loss in a way that is compatible with the downstream tasks, so as to guarantee good and stable performance. 

It is well-known, for instance, that multi-stage stochastic optimization problems are continuous under the \emph{adapted} Wasserstein distance, while they may be discontinuous under the standard Wasserstein distance \citep{pflug2012distance, pflug2014multistage, backhoff2020adapted}. This insight prompted several authors to propose new time-series generative models that attempt to minimize the adapted Wasserstein distance, either partially \citep{xu2020cot} or its one-sided\footnote{Also known in the literature as the causal Wasserstein distance, because it respects the temporal flow of information in the causal direction (from past to present). This terminology conflicts with the way the term ``causal'' is used in causal modelling. To avoid misunderstandings we talk of the ``$G$-causal'' Wasserstein distance and refer to the causal Wasserstein distance as the ``one-sided'' adapted Wasserstein distance.} version \citep{acciaio2024time}. 

In this paper we prove a generalization of this result for causal models. Specifically, we show that causal optimization problems (i.e. problems in which the control variables can depend only on the parents of the state variables in the underlying causal DAG $G$) are continuous with respect to the $G$-causal Wasserstein distance \citep{cheridito2025optimal}.

Furthermore, we prove that solutions to $G$-causal optimization problems are always interventionally robust. This means that causal optimization can be understood as a way of performing Distributionally Robust Optimization (DRO) \citep{chen2020distributionally, kuhn2025distributionally} by taking into account the problem's causal structure.

Next, we address the challenge of designing a generative model capable of good approximations under the $G$-causal Wasserstein distance. We radically depart from existing approaches for the adapted Wasserstein distance and propose a novel $G$-causal normalizing flow model based on invertible neural couplings that respect the causal structure of the data. We prove a universal approximation property for this model class and that maximum likelihood training indeed leads to distributions that are close to the target distribution in the $G$-causal Wasserstein distance. Since the standard, adapted and CO-OT Wasserstein distances are all special cases of the $G$-causal Wasserstein distance, this model family provides optimal generative augmentation models for a vast class of empirical applications.

\textbf{Contributions.} Our main contributions are the following:

\begin{itemize}
    \item We prove that causal optimization problems (i.e. problems in which optimizers must be functions of the state variables' parents in the causal DAG $G$) are continuous under the $G$-causal Wasserstein distance, but may be discontinuous under the standard Wasserstein distance.
    \item We prove that solutions to $G$-causal optimization problems are always interventionally robust.
    \item We introduce $G$-causal normalizing flows and we prove that they satisfy a universal approximation property for causal structural models under very mild conditions.
    \item We prove that $G$-causal normalizing flows minimize the $G$-causal Wasserstein distance between data and model distribution by simple likelihood maximization.
    \item We show empirically that $G$-causal normalizing flows outperform non-causal generative models (such as variational auto-encoders, standard normalizing flows, and nearest-neighbor KDE) when used to perform generative data augmentation in two empirical setups: causal regression and mean-variance portfolio optimization in causal factor models.
\end{itemize}

\section{Background}

\textbf{Notation.} We denote by $\|\cdot\|$ the Euclidean norm on $\Rb^d$ and by $L^p(\mu)$ the space $L^p(\Rb^d, \mathcal{B}(\Rb^d), \mu)$ equipped with the norm $\| f \|_{L^p(\mu)} := \left( \int_{\Rb^d} \| f(z) \|^p \mu(dz) \right)^{1/p}$. $\Pc(\Rb^d)$ denotes the space of all Borel probability measures on $\Rb^d$. $\Nc(\mu, \Sigma)$ is the multivariate Gaussian distribution with mean $\mu$ and covariance matrix $\Sigma$, $\Uc([0,1]^d)$ is the uniform distribution on the $d$-dimensional hypercube, $I_d$ denotes the $d \times d$ identity matrix.

We use set-indices to slice vectors, i.e. if $x = (x_1, \ldots, x_d) \in \Rb^d$ and $A \subseteq \{1, \ldots, d\}$, then $x_A := (x_i, i \in A) \in \Rb^{|A|}$. If $\mu \in \Pc(\Rb^d)$ and $X=(X_1, \ldots, X_d) \sim \mu$, then the regular conditional distribution of $X_A$ given $X_B$ is denoted by $\mu(dx_A|x_B)$, for all $A, B \subseteq \{1, \ldots, d\}$ with $A \cap B = \emptyset$.

\subsection{Structural Causal Models}

We assume throughout that $G=(V,E)$ is a given directed acyclic graph (DAG) with a finite index set $V=\{1, \ldots, d\}$, which we assume, without loss of generality, to be sorted (i.e. $(i, j) \in E$, then $i < j$).  If $A \subseteq V$, we denote by $\PA{A} := \{i \in V \setminus A \st \exists j \in A \st (i, j) \in E\}$ the set of parents of the vertices in $A$ (notice that $\PA{A} \subseteq V \setminus A$ by definition).

In this paper, we work with structural causal models, as presented in \citet{peters2017elements}.

\begin{definition}[Structural Causal Model (SCM)]
Given a DAG $G=(V,E)$, a Structural Causal Model (SCM) is a collection of assignments
$$X_i := f_i(X_{\PA{i}}, U_i), \quad \text{for all $i=1, \ldots, d$},$$
where the noise variables $(U_i, i=1, \ldots, d)$ are mutually independent.
\end{definition}

\subsection{$G$-causal Wasserstein distance}

\begin{definition}[G-compatible distribution]
\label{def:G-compatible_distribution}
A distribution $\mu \in \mathcal{P}(\Rb^d)$ is said to be $G$-compatible, and we denote it by $\mu \in \Pc_G(\Rb^d)$, if any of the following equivalent conditions holds:
\begin{enumerate}
\item there exist a random vector $X = (X_1, \ldots, X_d) \sim \mu$ together with measurable functions $f_i : \Rb^{|\PA{i}|} \times \Rb \to \Rb$, ($i = 1, \ldots, n$), and mutually independent random variables $(U_i, i=1,\ldots, d)$ such that 
$$X_i = f_i(X_{\PA{i}}, U_i), \quad \text{for all $i = 1, \ldots, d$}.$$
\item For every $X \sim \mu$, one has
$$ X_i \indep X_{1:i-1} \given X_{\PA{i}}, \quad \text{for all $i=2, \ldots, d$}.$$
\item The distribution $\mu$ admits the following disintegration:
$$ \mu(dx_1, \ldots, dx_d) = \prod_{i=1}^d \mu(dx_i \given x_{\PA{i}}).$$
\end{enumerate}
\end{definition}

For a proof of the equivalence of these three conditions, see \citet[Remark 3.2]{cheridito2025optimal}.

\begin{definition}[$G$-bicausal couplings]
A coupling $\pi \in \Pi(\mu, \nu)$ between two distributions $\mu, \nu \in \Pc_G(\Rb^d)$ is \textbf{$G$-causal} if there exist $(X, X') \sim \pi$ such that
$$X'_i = g_i(X_i, X_{\PA{i}}, X'_{\PA{i}}, U_i)$$
for some measurable mappings $(g_i)_{i=1}^d$ and mutually independent random variables $(U_i)_{i=1}^d$. If also the distribution of $(X',X)$ is $G$-causal, then we say that $\pi$ is \textbf{$G$-bicausal}. We denote by $\Pi_G^{\text{bc}}(\mu, \nu)$ the set of all $G$-bicausal couplings between $\mu$ and $\nu$.
\end{definition}

\begin{definition}[$G$-causal Wasserstein distance]
Denote by $\Pc_{G,1}(\Rb^d)$ the space of all $G$-compatible distributions with finite first moments. Then the $G$-causal Wasserstein distance between $\mu, \nu \in \mathcal{P}_{G,1}(\Rb^d)$ is defined as:
$$ W_{G}(\mu, \nu) := \inf_{\pi \in \Pi_G^{\text{bc}}(\mu, \nu)} \int_{\Rb^d \times \Rb^d} \|x-x'\| \, \pi(dx, dx').$$
Furthermore, $W_G$ defines a semi-metric on the space $\Pc_{G,1}(\Rb^d)$ \citep[Proposition 4.3]{cheridito2025optimal}.
\end{definition}

\section{Robust optimization in Structural Causal Models}

Suppose we are given an SCM $X \sim \mu \in \Pc_G(\Rb^d)$ on a DAG $G=(V,E)$ and we want to solve a stochastic optimization problem in which the state variables $X_T$ are specified by a vertex subset $T \subseteq V$ (called the \emph{target set}) and the control variables can potentially be all remaining vertices in the graph, i.e. $X_{V \setminus T}$. To avoid feedback loops between state and control variables, we will need the following technical assumption.

\begin{assumption}
\label{ass:quotient_is_DAG}
The DAG $G=(V,E)$ and the target set $T \subseteq V$ are such that $G$ quotiened by the partition $\{T\} \cup \{\{i\}, i \in V \setminus T\}$ is a DAG.
\end{assumption}

\begin{remark} \cref{ass:quotient_is_DAG} is quite mild and is equivalent to asking that if $i, j \in T$, then $X_i$ cannot be the parent of a parent of $X_j$. This guarantees that $\PA{T} \cap \CH{T} = \emptyset$, which is nothing but asking that $X_T$ be part of a valid SCM \emph{as a random vector}, see \cref{fig:graph} and \ref{fig:graph2}.
    
\end{remark}

\begin{figure}[h]
  \centering
  \begin{minipage}[b]{0.45\textwidth}
    \centering
    \includegraphics[width=\linewidth]{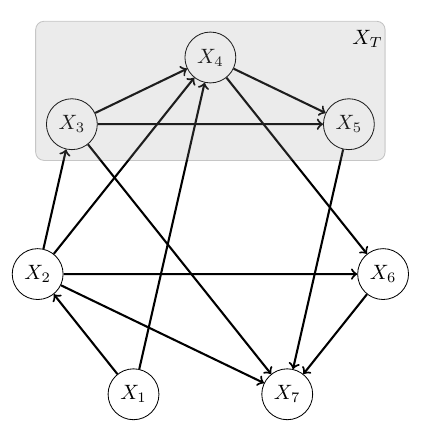}
    \caption{DAG $G$ before quotienting (target set $T$ highlighted).}
    \label{fig:graph}
  \end{minipage}
  \hspace{0.05\textwidth} % space between images
  \begin{minipage}[b]{0.45\textwidth}
    \centering
    \includegraphics[width=\linewidth]{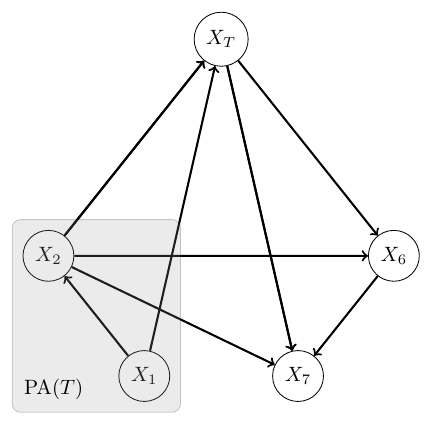}
    \caption{DAG $G$ after quotienting (vertex set $\PA{T}$ highlighted).}
    \label{fig:graph2}
  \end{minipage}
\end{figure}

\begin{definition}[$G$-causal function]
Given a target set $T \subseteq V$, we say that a function $h:\Rb^{|V \setminus T|} \to \Rb^{|T|}$ is $G$-causal (with respect to $T$) if $h$ depends only on the parents of $X_T$, i.e. $h(x) = h(x_{\PA{T}})$, for all $x \in \Rb^{|V \setminus T|}$.    
\end{definition}

\begin{definition}[$G$-causal optimization problem] Let $G=(V,E)$ be a sorted DAG, $X \sim \mu \in \Pc_G(\Rb^d)$ and let $T \subseteq V$ be a target set.
If $Q: \Rb^{|T|} \times \Rb^{|V \setminus T|} \to \overline{\Rb}$ is a function to be optimized, then a $G$-causal optimization problem (with respect to $T$) is an optimization problem of the following form:
\begin{equation}
\label{eq:causal_control}
\min_{\substack{h: \Rb^{|V \setminus T|} \to \Rb^{|T|} \\ \text{$h$ is $G$-causal}}} \E{\mu}{Q(X_T, h(X_{V \setminus T}))}.  
\end{equation}
Any minimizer of \eqref{eq:causal_control} is called a $G$-causal optimizer.
\end{definition}

The following result shows that $G$-causal optimizers are always interventionally robust. This underscores the desirability of $G$-causal optimizers when we expect the data distribution to undergo distributional shifts due to interventions between training and testing time.

\begin{theorem}[Robustness of $G$-causal optimizers]
\label{thm:robustness}
Let $h^*$ be a solution of the problem in \cref{eq:causal_control}. Then:
$$h^* \in \argmin_{h: \Rb^{|V \setminus T|} \to \Rb^{|T|}} \sup_{\nu \in \interventions{\mu}} \E{\nu}{Q(X_T, h(X_{V \setminus T}))},$$
where 
$$\interventions{\mu} := \{ \nu \in \Pc(\Rb^d) \; \st \; \text{$\nu(dx_T|x_{\PA{T}}) = \mu(dx_T|x_{\PA{T}})$ and $\supp{\nu(dx_{\PA{T}})} \subseteq \supp{\mu(dx_{\PA{T}})}$}\}$$
is the set of all interventional distributions that leave the causal mechanism of $X_T$ unchanged.
\end{theorem}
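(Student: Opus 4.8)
The plan is to show that for \emph{every} admissible control $h$ the worst-case objective
$$R(h) := \sup_{\nu \in \interventions{\mu}} \E{\nu}{Q(X_T, h(X_{V \setminus T}))}$$
is bounded below by one fixed quantity, and that this bound is exactly attained by the $G$-causal optimizer $h^*$. The guiding observation is that both sides collapse to a \emph{pointwise} optimization over the values $x_{\PA{T}}$ of the parents of the target, because the defining constraint of $\interventions{\mu}$ freezes the causal mechanism $\mu(dx_T \given x_{\PA{T}})$ while leaving the parent marginal essentially free. Accordingly I would define the pointwise optimal cost
$$\phi(x_{\PA{T}}) := \inf_{a \in \Rb^{|T|}} \int Q(x_T, a) \, \mu(dx_T \given x_{\PA{T}}),$$
which depends only on the fixed mechanism and not on $\nu$.

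First I would reduce $R(h^*)$. Since $h^*$ is $G$-causal, $h^*(X_{V \setminus T}) = h^*(X_{\PA{T}})$, so the integrand depends on $\nu$ only through the joint law of $(X_T, X_{\PA{T}})$. Disintegrating against $X_{\PA{T}}$ and substituting the constraint $\nu(dx_T \given x_{\PA{T}}) = \mu(dx_T \given x_{\PA{T}})$ gives
$$\E{\nu}{Q(X_T, h^*(X_{\PA{T}}))} = \int \left( \int Q(x_T, h^*(x_{\PA{T}})) \, \mu(dx_T \given x_{\PA{T}}) \right) \nu(dx_{\PA{T}}).$$
As the inner integral no longer depends on $\nu$, and the only surviving constraint on $\nu(dx_{\PA{T}})$ is $\supp{\nu(dx_{\PA{T}})} \subseteq \supp{\mu(dx_{\PA{T}})}$, the supremum over $\nu$ reduces (via Dirac interventions in the support) to a pointwise supremum over the support. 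Invoking that a $G$-causal optimizer can be taken to minimize the inner integral pointwise, i.e. its canonical version satisfies $\int Q(x_T, h^*(x_{\PA{T}})) \, \mu(dx_T \given x_{\PA{T}}) = \phi(x_{\PA{T}})$ on the support, yields $R(h^*) = \sup_{x_{\PA{T}} \in \supp{\mu(dx_{\PA{T}})}} \phi(x_{\PA{T}})$.

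Next I would lower-bound $R(h)$ for an \emph{arbitrary} control $h$. Fix any $x^0 \in \supp{\mu(dx_{\PA{T}})}$ and any value $v$ for the non-parent controls $X_{(V \setminus T)\setminus \PA{T}}$, and let $\nu$ be the law under which $X_{\PA{T}} = x^0$ and $X_{(V \setminus T)\setminus \PA{T}} = v$ are deterministic while $X_T \sim \mu(\cdot \given x^0)$. This $\nu$ lies in $\interventions{\mu}$: its target mechanism at $x^0$ equals $\mu(dx_T \given x^0)$, and its parent marginal $\delta_{x^0}$ is supported in $\supp{\mu(dx_{\PA{T}})}$. Evaluating the objective on this frozen intervention gives
$$\E{\nu}{Q(X_T, h(X_{V \setminus T}))} = \int Q(x_T, h(x^0, v)) \, \mu(dx_T \given x^0) \geq \phi(x^0),$$
the inequality holding because $\phi(x^0)$ is an infimum over all arguments $a$ and $h(x^0,v)$ is one such argument. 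Taking the supremum over $x^0$ gives $R(h) \geq \sup_{x^0 \in \supp{\mu(dx_{\PA{T}})}} \phi(x^0) = R(h^*)$, which is precisely $h^* \in \argmin_{h} R(h)$.

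The main obstacle is the measure-theoretic bookkeeping in the reduction of $R(h^*)$: one must justify that the supremum over $\interventions{\mu}$ collapses to a pointwise supremum over $\supp{\mu(dx_{\PA{T}})}$ (Dirac interventions are admissible), and one must fix the representative of $h^*$ so that it minimizes the inner integral \emph{pointwise} on the support and not merely $\mu(dx_{\PA{T}})$-almost everywhere; otherwise altering $h^*$ on a parent-null subset of the support could inflate $R(h^*)$ while leaving the $G$-causal objective unchanged. The complementary verification, that the frozen interventions genuinely lie in $\interventions{\mu}$, is then routine by checking the two defining constraints against the constructed law.
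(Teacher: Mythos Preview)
Your argument is correct and close in spirit to the paper's, but the implementation differs. The paper does not compute $R(h^*)$ explicitly; instead, given an arbitrary $h$ and an arbitrary $\nu \in \interventions{\mu}$, it constructs the product measure $\nu'(dx) := \nu(dx_{V \setminus (T \cup \PA{T})}) \otimes \nu(dx_{\PA{T}}, dx_T)$ (decoupling the non-parent controls from the parent--target block while keeping $\nu' \in \interventions{\mu}$) and shows directly that $\E{\nu'}{Q(X_T, h(X_{V \setminus T}))} \ge \E{\nu}{Q(X_T, h^*(X_{\PA{T}}))}$, which already gives $R(h) \ge R(h^*)$. You instead specialize to Dirac interventions on the whole non-target block, identify $R(h^*)$ with the pointwise worst-case cost $\sup_{x_{\PA{T}} \in \supp{\mu(dx_{\PA{T}})}} \phi(x_{\PA{T}})$, and then lower-bound $R(h)$ by that same quantity. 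Both routes ultimately rest on the pointwise optimality of $h^*$ (the interchangeability principle, \cref{lem:interchangeability} in the paper); your version buys an explicit formula for the robust value, while the paper's avoids having to argue that Diracs realize the supremum. The null-set caveat you raise in your final paragraph---that one needs a representative of $h^*$ which is pointwise optimal on all of $\supp{\mu(dx_{\PA{T}})}$ rather than merely $\mu$-a.e.---is well spotted and applies equally to the paper's proof, which likewise passes from a $\mu$-a.e.\ statement to an inequality under $\nu(dx_{\PA{T}})$ via the support inclusion alone.
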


\begin{proof}
It's enough to show that for any $h: \Rb^{|V \setminus T|} \to \Rb^{|T|}$ and any $\nu \in \interventions{\mu}$, there exists a $\nu' \in \interventions{\mu}$ such that $\E{\nu'}{Q(X_T, h(X_{V \setminus T}))} \ge \E{\nu}{Q(X_T, h^*(X_{\PA{T}}))}$.

Given $\nu \in \interventions{\mu}$, define $\nu'(dx) := \nu(dx_{V \setminus (T \cup \PA{T})}) \nu(dx_{\PA{T}}, dx_T)$. Then:
\begin{align*}
\E{\nu'}{Q(X_T, h(X_{V \setminus T}))} & = \int \nu(dx_{V \setminus (T \cup \PA{T})}) \int \nu(dx_{\PA{T}}, dx_T) Q(x_T, h(x_{V \setminus T})) \\
& = \int \nu(dx_{V \setminus (T \cup \PA{T})}) \int \nu(x_{\PA{T}}) \int \mu(dx_T \given x_{\PA{T}}) Q(x_T, h(x_{V \setminus T}))\\
& \ge \int \nu(dx_{V \setminus (T \cup \PA{T})}) \int \nu(x_{\PA{T}}) \int \mu(dx_T \given x_{\PA{T}}) Q(x_T, h^*(x_{\PA{T}})) \\
& = \E{\nu}{Q(X_T, h^*(X_{\PA{T}}))}
\end{align*}
where the second equality follows from $\nu \in \interventions{\mu}$ and the inequality follows from \cref{eq:causal_control}, \cref{lem:interchangeability}, and $\supp{\nu(dx_{\PA{Y}})} \subseteq \supp{\mu(dx_{\PA{Y}})}$.
\end{proof}

\begin{remark} The theorem above is a generalization of \cite[Theorem 4]{rojas2018invariant}, which covered the mean squared loss only. We explicitly added the assumption $\supp{\nu(dx_{\PA{Y}})} \subseteq \supp{\mu(dx_{\PA{Y}})}$, for all $\nu \in \interventions{\mu}$, which is needed also for their theorem to hold.
\end{remark}

The next theorem shows that the value functionals of $G$-causal optimization problems are continuous with respect to the $G$-causal Wasserstein distance, while they may fail to be continuous with respect to the standard Wasserstein distance (as we show in \cref{ex:discontinuous} below). This proves that the $G$-causal Wasserstein distance is the right distance to control errors in causal optimization problems and, in particular, interventionally robust optimization problems. 

\begin{theorem}[Continuity of $G$-causal optimization problems]
\label{thm:continuity}
Let $G=(V,E)$ be a sorted DAG, $X \sim \mu \in \Pc_G(\Rb^d)$ and let $T \subseteq V$ be a target set, such that \cref{ass:quotient_is_DAG} holds. If $Q: \Rb^{|T|} \times \Rb^{|V \setminus T|} \to \overline{\Rb}$ is such that $x \mapsto Q(x, h)$ is locally $L$-Lipschitz (uniformly in $h$) and $h \mapsto Q(x, h)$ is convex, then the value functional 
$$\displaystyle \mu \mapsto \Vc(\mu) := \min_{\substack{h: \Rb^{|V \setminus T|} \to \Rb^{|T|} \\ \text{$h$ is $G$-causal}}} \E{\mu}{Q(X_T, h(X_{V \setminus T}))}$$ 
is continuous with respect to the $G$-causal Wasserstein distance.
\end{theorem}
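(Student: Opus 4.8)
The plan is to prove the one-sided quantitative estimate $\Vc(\nu) \le \Vc(\mu) + L\, W_G(\mu,\nu)$ and then invoke symmetry, from which continuity follows at once. The mechanism is to transport a near-optimal $G$-causal control for $\mu$ into a feasible $G$-causal control for $\nu$ along an almost-optimal $G$-bicausal coupling, losing only the transport cost. Concretely, fix $\epsilon>0$, pick a $G$-causal $h^*$ with $\E{\mu}{Q(X_T, h^*(X_{\PA{T}}))}\le\Vc(\mu)+\epsilon$ (working with $\epsilon$-optimizers sidesteps any existence question), and pick $\pi\in\Pi_G^{\text{bc}}(\mu,\nu)$ with $(X,X')\sim\pi$ and $\E{\pi}{\|X-X'\|}\le W_G(\mu,\nu)+\epsilon$. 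Since $h^*$ depends only on the parents, $\E{\pi}{Q(X_T, h^*(X_{\PA{T}}))}=\E{\mu}{Q(X_T, h^*(X_{\PA{T}}))}$, and the whole argument reduces to replacing $h^*(X_{\PA{T}})$ — which is a function of the \emph{wrong} marginal $X$ — by a genuine function of $X'_{\PA{T}}$ at controlled cost.

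The construction is $\tilde h(y') := \E{\pi}{h^*(X_{\PA{T}}) \given X'_{\PA{T}} = y'}$, which is $G$-causal for $\nu$ by definition. The crucial structural input is the conditional independence $X_{\PA{T}} \indep X'_T \given X'_{\PA{T}}$, which I will extract from $G$-bicausality (see the next paragraph); it yields $\tilde h(X'_{\PA{T}}) = \E{\pi}{h^*(X_{\PA{T}}) \given X'_{\PA{T}}, X'_T}$. Conditioning on $\sigma(X'_{\PA{T}}, X'_T)$ and applying the conditional Jensen inequality to the convex map $h \mapsto Q(X'_T, h)$ then gives, after taking expectations,
$$\E{\pi}{Q(X'_T, \tilde h(X'_{\PA{T}}))} \le \E{\pi}{Q(X'_T, h^*(X_{\PA{T}}))}.$$
Using that $x \mapsto Q(x, h)$ is (locally) $L$-Lipschitz uniformly in $h$ to swap $X'_T$ for $X_T$, I obtain $\E{\pi}{Q(X'_T, h^*(X_{\PA{T}}))} \le \E{\pi}{Q(X_T, h^*(X_{\PA{T}}))} + L\,\E{\pi}{\|X_T - X'_T\|}$. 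Chaining these with feasibility of $\tilde h$ (so $\Vc(\nu)\le \E{\pi}{Q(X'_T, \tilde h(X'_{\PA{T}}))}$), $\epsilon$-optimality of $h^*$, and $\E{\pi}{\|X_T - X'_T\|}\le\E{\pi}{\|X-X'\|}$, and letting $\epsilon\to 0$, gives the claimed bound. The reverse inequality is obtained by the mirror-image construction using the anticausal conditional independence $X_T \indep X'_{\PA{T}} \given X_{\PA{T}}$.

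The main obstacle — and the only genuinely nonroutine step — is justifying the conditional independence $X_{\PA{T}} \indep X'_T \given X'_{\PA{T}}$ (and its mirror) together with the fact that, under $\pi$, the conditional law of $X'_T$ given $X'_{\PA{T}}$ is exactly $\nu(dx_T \given x_{\PA{T}})$, independent of $X_{\PA{T}}$. I would derive this from the disintegration of $G$-bicausal couplings into node-wise couplings $\kappa_i(dx_i,dx'_i \given x_{\PA{i}}, x'_{\PA{i}})$ of the conditionals $\mu(dx_i\given x_{\PA{i}})$ and $\nu(dx'_i\given x'_{\PA{i}})$, which follows from \cref{def:G-compatible_distribution} and the bicausal structure (cf. \citet{cheridito2025optimal}); marginalizing each $\kappa_i$ forces the conditional law of $X'_i$ given $(X_{\PA{i}}, X'_{\PA{i}})$ to depend on $X'_{\PA{i}}$ only, and building up over $i\in T$ in topological order propagates this to the whole block $X'_T$. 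This is precisely where \cref{ass:quotient_is_DAG} is needed: it guarantees $\PA{T}\cap\CH{T}=\emptyset$, so that $T$ collapses to a single node whose parent set is exactly $\PA{T}$ and the node-wise disintegration cleanly separates $X_T$ from the remaining coordinates conditionally on $\PA{T}$.

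Two secondary technical points remain. First, the hypothesis is only \emph{local} Lipschitzness, so the constant $L$ in the estimate is a local one; to upgrade the bound to continuity I would combine a truncation on $\{\|X\|\vee\|X'\|\le R\}$ (where $L=L_R$ applies) with the control of the tails afforded by the finite first moments of the $G$-compatible measures and by $W_G(\mu_n,\mu)\to 0$, sending the remainder to zero as $n\to\infty$ and $R\to\infty$. Second, one should check integrability of $h^*(X_{\PA{T}})$ so that $\tilde h$ and the Jensen step are well defined; this is automatic on the set where the objective is finite, which is all that is used. These are standard and I expect no difficulty beyond bookkeeping.
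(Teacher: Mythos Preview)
Your proof is correct and uses the same three ingredients as the paper's (Lipschitz in the first argument, Jensen/convexity in the second, and a conditional-independence step coming from the causal structure), but you organize them differently, and the CI you invoke is not the same one the paper uses.

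In the paper's proof one starts on the $\nu$ side with an arbitrary $G$-causal $g$, swaps $x'_T$ for $x_T$ via Lipschitz, then applies Jensen with respect to $\pi(dx'\mid x)$ to obtain $h(x):=\int g(x'_{\PA{T}})\pi(dx'\mid x)$. This $h$ is \emph{not} $G$-causal: by $G$-causality of $\pi$ it depends on all ancestors of $\PA{T}$ through $x$. The paper then invokes the \emph{local Markov property of $\mu$}, namely $X_T\indep X_{A}\mid X_{\PA{T}}$ with $A\supseteq\PA{T}$ consisting of non-descendants of $T$ (this is where \cref{ass:quotient_is_DAG} enters), to reduce the comparison back to $\Vc(\mu)$.

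Your route starts on the $\mu$ side with a near-optimal $h^*$ and averages in the other direction, $\tilde h(x'_{\PA{T}})=\E{\pi}{h^*(X_{\PA{T}})\mid X'_{\PA{T}}}$, which is $G$-causal for $\nu$ by construction. The price is that Jensen now requires the CI $X_{\PA{T}}\indep X'_T\mid X'_{\PA{T}}$ \emph{under the coupling} $\pi$, which you correctly derive from the node-wise disintegration of $G$-bicausal couplings together with \cref{ass:quotient_is_DAG}. (Your ``build up over $i\in T$'' sketch is right in spirit; to make it airtight you should first integrate out the nodes in $V\setminus(T\cup\PA{T})$ that are not ancestors of $\PA{T}$ --- \cref{ass:quotient_is_DAG} guarantees these can be stripped off without touching the $T$ block --- after which $\pi(dx'_T\mid x_{\PA{T}},x'_{\PA{T}})=\prod_{i\in T}\nu(dx'_i\mid x'_{\PA{i}})=\nu(dx'_T\mid x'_{\PA{T}})$ follows.)

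What each buys: the paper's CI is the standard local Markov property of the Bayesian network $\mu$, so it is immediate once one checks the relevant set is among the non-descendants of $T$; your CI is a property of the bicausal \emph{coupling} and needs the node-wise disintegration, but in return your transported control is feasible for $\nu$ without any further massaging. Both routes yield the same quantitative bound $|\Vc(\mu)-\Vc(\nu)|\le L\,W_G(\mu,\nu)$ (modulo the local-Lipschitz truncation, which the paper glosses over and you handle explicitly).
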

\begin{proof} See proof in \cref{proof:continuity}.    
\end{proof}

\begin{example}
\label{ex:discontinuous}
Define $\mu_\varepsilon \in \Pc_G(\Rb^2)$ as the following SCM:
$$\begin{cases} Y := \sign{X}, \\ X := \varepsilon \cdot U, \end{cases} \quad  \text{where $U \sim \Ra{1/2}$},$$
where $\Ra{p}$ denoted the Rademacher distribution $p \delta_1 + (1-p) \delta_{-1}$, and consider the following $G$-causal regression problem:
$$\Vc(\mu) = \inf_{\substack{h:\Rb \to \Rb \\ \text{$h$ $G$-causal}}} \E{\mu}{(Y - h(X))^2}.$$
Then  as $\varepsilon \to 0$ we have that $\mu_\varepsilon = \frac{1}{2} \delta_{(\varepsilon, 1)} + \frac{1}{2} \delta_{(-\varepsilon, -1)}$ converges to $\mu := \frac{1}{2} \delta_{(0, 1)} + \frac{1}{2} \delta_{(0, -1)} = \delta_0 \otimes \Ra{1/2}$ under the standard Wasserstein distance, but $\lim_{\varepsilon \to 0} \Vc(\mu_\varepsilon) = 0 \neq 1 = \Vc(\mu)$.
\end{example}

\section{Proposed method: $G$-causal normalizing flows}

\cref{thm:continuity} and \cref{ex:discontinuous} imply that generative augmentation models that are not trained under the $G$-causal Wasserstein distance may lead to optimizers that severely underperform on $G$-causal downstream tasks. To solve this issue, we propose a novel normalizing flow architecture capable of minimizing the $G$-causal Wasserstein distance from any data distribution $\mu \in \Pc_G(\Rb^d)$. Since the standard, adapted and CO-OT Wasserstein distances are all special cases of the $G$-causal Wasserstein distance, this model family provides optimal generative augmentation models for a vast class of empirical applications.

A $G$-causal normalizing flow $\hat{T} = \hat{T}^{(d)} \circ \cdots \circ \hat{T}^{(1)}$ is a composition of $d$ neural coupling flows $\hat{T}^{(k)}: \Rb^d \to \Rb^d$ of the following form:
\begin{equation}
\label{eq:hypercoupling}
\hat{T}^{(k)}_i(x) = \begin{cases} g(x_i; \theta(x_{\PA{i}})) & \text{if $i = k$} \\ \text{id} & \text{if $i \neq k$}  \end{cases}
\end{equation}
where $g:\Rb \times \Theta(n) \to \Rb$ is a shallow MLP of the form:
\begin{equation}
\label{eq:node_MLP}
g(x, \theta) = \sum_{i=1}^n w^{(2)}_i \rho( w^{(1)}_i x + b^{(1)}_i) + b^{(2)}    
\end{equation}    
with parameters $\theta := (w^{(1)}, b^{(1)}, w^{(2)}, b^{(2)}) \in \Theta(n) := \Rbpos^n \times \Rb^n \times \Rbpos^n \times \Rb$ and custom activation function\footnote{Recall that the \text{LeakyReLU} activation function is defined as $\text{LeakyReLU}_\alpha(x) := x \1_{\{x \ge 0\}} + \alpha x \1_{\{x < 0\}}$.}:
\begin{equation}
\label{eq:activation}
\rho(x) = \frac{1}{2}\text{LeakyReLU}_{\alpha-1}(1+x) - \frac{1}{2}\text{LeakyReLU}_{\alpha-1}(1-x), \quad \alpha \in (0, 1).    
\end{equation}

We denote by \IncrMLP($n$) the class of all MLPs with $n$ hidden neurons and parameter space $\Theta(n)$. It is easy to see that $\IncrMLP(n)$ contains only continuous, piecewise linear, strictly increasing (and, therefore, \emph{invertible}) functions, thanks to the choice of activation function\footnote{One cannot just take $\rho(x) = \text{ReLU}(x)$, because $g$ could fail to be strictly increasing, nor $\rho(x) = \text{LeakyReLU}_\alpha(x)$, because then $g$ would be constrained to be convex, which harms model capacity.} and parameter space. The inverse of $g$ and its derivative can be computed efficiently, which allows the coupling flow in \cref{eq:hypercoupling} to be easily implemented in a normalizing flow model (see code in the supplementary material).

In \cref{eq:hypercoupling} we specify the parameters of $g$ in terms of a function $\theta(x_{\PA{i}})$, which we take to be an MLP\footnote{In practice, we enforce $\theta(x_{\PA{i}}) \in \Theta(n)$ by constraining its outputs corresponding to the weights $w^{(1)}$ and $w^{(2)}$ to be strictly positive, either by using a ReLU activation function or by taking their absolute value.}. The particular choice of MLP class does not matter, as long as the assumptions of \cite[Theorem 1]{leshno1993multilayer} are satisfied\footnote{The activation function must be non-polynomial and locally essentially bounded on $\Rb$. All commonly used activation functions (including $\text{ReLU}$) satisfy this.} and we denote by $\MLP$ any such class. Since the outputs of $\theta(\cdot) \in \MLP$ are used as parameters for another MLP, $g(\cdot)$, it is common to say that $\theta(\cdot)$ is a \emph{hypernetwork} \citep{chauhan2024brief}. Therefore we say that the coupling flow in \cref{eq:hypercoupling} is a hypercoupling flow and we denote by $\HyperCpl(n, \theta(\cdot))$ the class of hypercoupling flows with $g(\cdot) \in \IncrMLP(n)$ and parameter hypernetwork $\theta(\cdot) \in \MLP$.

Since each hypercoupling flow in a $G$-causal normalizing flow acts only on a subset of the input coordinates it effectively functions as a scale in a multi-scale architecture, thus reducing the computational burden by exploiting our a priori knowledge of the causal DAG $G$.

\begin{remark}
We emphasize that the DAG $G$ is an \emph{input} of our model, not an output. We assume, therefore, that the modeler has estimated the causal skeleton $G$, using any of the available methods for causal discovery \cite{nogueira2022methods, zanga2022survey}. On the other hand, we do not require any knowledge of the functional form of the causal mechanisms, which our model will learn directly from data.
\end{remark}

Next, we turn to the task of proving that $G$-causal normalizing flows are universal approximators for structural causal models.

\begin{definition}[$G$-compatible transformation.]
Let $G$ be a sorted DAG. A map $T: \Rb^d \to \Rb^d$ is a $G$-compatible transformation if each coordinate $T_i(x)$ is a function of $(x_i, x_{\PA{i}})$, for all $i=1, \ldots, d$. Furthermore, a $G$-compatible transformation $T$ is called (strictly) increasing if each coordinate $T_i$ is (strictly) increasing in $x_i$.
\end{definition}

\begin{theorem}
\label{thm:G-compatible_transformations_are_universal}
Let $\mu \in \Pc_G(\Rb^d)$ be an absolutely continuous distribution. Then there exists a $G$-compatible, strictly increasing transformation $T: \Rb^d \to \Rb^d$, such that $ \push{T}{\Uc([0, 1]^d)} = \mu$.

Furthermore, $T$ is of the form $T:= T^{(d)} \circ \cdots \circ T^{(1)}$, where each $T^{(k)}:\Rb^d \to \Rb^d$ is defined as:
\begin{equation}
\label{eq:conditional_quantiles}
T^{(k)}_i(x) = \begin{cases} F^{-1}_i(x_i \given x_{\PA{i}}) & i = k, \\ \text{id} & i \neq k. \end{cases} \quad (k=1, \ldots, d) 
\end{equation}
where $F^{-1}_i$ is the (conditional) quantile function of the random variable $X_i \sim \mu(dx_i)$ given its parents $X_{\PA{i}} \sim \mu(dx_{\PA{i}})$.
\end{theorem}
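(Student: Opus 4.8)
The plan is to recognize the claimed map as the causal analogue of the Knothe--Rosenblatt rearrangement: I build $T$ from the one-dimensional conditional quantile functions dictated by the disintegration of $\mu$ along the topological order of $G$, and verify the pushforward identity by an induction that processes the nodes $1, \ldots, d$ in order. First I would record the structural input: since $\mu \in \Pc_G(\Rb^d)$, condition~3 of \cref{def:G-compatible_distribution} gives the disintegration $\mu(dx) = \prod_{i=1}^d \mu(dx_i \given x_{\PA{i}})$, and because $G$ is sorted we have $\PA{i} \subseteq \{1, \ldots, i-1\}$, so every set $\{1, \ldots, k\}$ is ancestrally closed and its marginal $\mu_{1:k}$ disintegrates as $\mu_{1:k-1}(dx_{1:k-1}) \, \mu(dx_k \given x_{\PA{k}})$.

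Next I would establish that the conditional quantile functions are well defined and strictly increasing. Since $\mu$ is absolutely continuous, the marginal of $(X_{\PA{i}}, X_i)$ is absolutely continuous, so for almost every $x_{\PA{i}}$ the conditional law $\mu(dx_i \given x_{\PA{i}})$ has a density and hence no atoms; its conditional CDF $F_i(\cdot \given x_{\PA{i}})$ is therefore continuous, and its generalized inverse $F_i^{-1}(\cdot \given x_{\PA{i}})$ is strictly increasing on $(0,1)$. I would also invoke the standard fact that $(u, x_{\PA{i}}) \mapsto F_i^{-1}(u \given x_{\PA{i}})$ can be chosen jointly measurable, so the maps $T^{(k)}$ of \cref{eq:conditional_quantiles} are measurable. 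Each $T^{(k)}$ modifies only coordinate $k$ --- as a strictly increasing function of $x_k$ whose remaining arguments lie in $x_{\PA{k}} \subseteq x_{1:k-1}$ --- and leaves all other coordinates fixed; this is exactly the $G$-compatible, strictly increasing layer form of \cref{eq:hypercoupling}, so the composite $T = T^{(d)} \circ \cdots \circ T^{(1)}$ is strictly increasing in each coordinate (the finalized parent values never depend on $x_i$) and is assembled from $G$-compatible layers.

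The core step is the pushforward identity, which I would prove by induction on $k$, showing that $Z^{(k)} := T^{(k)} \circ \cdots \circ T^{(1)}(U)$, with $U \sim \Uc([0,1]^d)$, satisfies $(Z^{(k)}_1, \ldots, Z^{(k)}_k) \sim \mu_{1:k}$ while $(Z^{(k)}_{k+1}, \ldots, Z^{(k)}_d)$ are i.i.d.\ $\Uc([0,1])$ and independent of the first block. The base case $k=0$ is the definition of $U$. For the step, the inductive hypothesis makes $Z^{(k-1)}_{\PA{k}}$ distributed as $\mu(dx_{\PA{k}})$ (a marginal of $\mu_{1:k-1}$) and makes the untouched coordinate $Z^{(k-1)}_k$ uniform and independent of $Z^{(k-1)}_{1:k-1}$; conditioning on $Z^{(k-1)}_{1:k-1} = x_{1:k-1}$, the conditional inverse-transform lemma gives $F_k^{-1}(Z^{(k-1)}_k \given x_{\PA{k}}) \sim \mu(dx_k \given x_{\PA{k}})$, and combining with the disintegration of $\mu_{1:k}$ yields $(Z^{(k)}_1, \ldots, Z^{(k)}_k) \sim \mu_{1:k}$; since $T^{(k)}$ touches only coordinates $\le k$, the tail block stays i.i.d.\ uniform and independent. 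Taking $k=d$ gives $\push{T}{\Uc([0,1]^d)} = \mu$.

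The main obstacle is the measure-theoretic bookkeeping rather than any single hard estimate: one must (i) justify that absolute continuity of the joint descends to $\mu$-a.e.\ absolute continuity of each conditional, so that the quantile functions are genuinely strictly increasing and the exceptional parent sets are $\mu$-null and thus invisible to the pushforward; (ii) secure joint measurability of the conditional quantile maps; and (iii) keep precise track, at each node $k$, of the fact that the already-processed parent coordinates carry exactly the marginal law $\mu(dx_{\PA{k}})$ while the active coordinate is still uniform and independent of them --- this independence is what licenses the conditional inverse transform and is the crux of the induction. The sortedness of $G$ (so that $\PA{k} \subseteq \{1,\ldots,k-1\}$) is used decisively here to guarantee that all parents are finalized before node $k$ is processed.
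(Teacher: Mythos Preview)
Your proposal is correct and follows essentially the same Knothe--Rosenblatt strategy as the paper: build $T$ from the conditional quantile functions along the topological order and verify $\push{T}{\Uc([0,1]^d)} = \mu$. The only cosmetic difference is that the paper compresses your induction into a single invocation of condition~1 of \cref{def:G-compatible_distribution} (the SCM representation), observing that one may take $U \sim \Uc([0,1]^d)$ and $X_i = F_i^{-1}(U_i \given X_{\PA{i}})$ directly, whereas you unpack this via condition~3 (the disintegration) and an explicit layer-by-layer induction; the underlying content is the same.
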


\begin{proof} 
It is easy to check that $T$, as defined, is indeed a $G$-compatible, increasing transformation. The absolute continuity of $\mu$ implies that all conditional distributions admit a density \cite[Theorem 12.2]{jacod2004probability}, therefore a continuous cdf and a strictly monotone quantile function \cite[Proposition A.3 (ii)]{mcneil2015quantitative}.

Next, we show that $\push{T}{\Uc([0,1]^d)} = \mu$. By \cref{def:G-compatible_distribution} we know that there exists $X \sim \mu$ and measurable functions $f_i$ such that $X_i = f_i(X_\PA{i}, U_i)$ where $U = (U_1, \ldots, U_d)$ is a random vector of mutually independent random variables. Without loss of generality, we can take $U \sim \Uc([0,1]^d)$ and set $X_i = F^{-1}_i(U_i | X_\PA{i})$ \citep[Proposition A.6]{mcneil2015quantitative}).
\end{proof}

\begin{theorem}[Universal Approximation Property (UAP) for $G$-causal normalizing flows]
\label{thm:UAP}
Let $\mu \in \Pc_{G,1}(\Rb^d)$ be an absolutely continuous distribution with compact support and assume that the conditional cdfs $(x_k, x_{\PA{k}}) \mapsto F_k(x_k \given x_{\PA{k}})$ belong to $C^1(\Rb \times \Rb^{|\PA{k}|})$, for all $k = 1, \ldots, d$.

Then $G$-causal normalizing flows with base distribution $\Uc([0, 1]^d)$ are dense in the semi-metric space $(\Pc_{G,1}(\Rb^d), W_G)$, i.e. for every $\varepsilon > 0$, there exists a $G$-causal normalizing flow $\hat{T}$ such that
$$ W_G(\mu, \push{\hat{T}}{\Uc([0, 1]^d}) \le \varepsilon.$$
\end{theorem}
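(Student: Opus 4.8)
The plan is to combine the exact representation from \cref{thm:G-compatible_transformations_are_universal} with a coordinate-wise approximation of each conditional quantile map by an element of $\HyperCpl(n, \theta(\cdot))$, and then to control the resulting error in $W_G$ rather than in a stronger distance. Concretely, \cref{thm:G-compatible_transformations_are_universal} gives an exact factorization $T = T^{(d)} \circ \cdots \circ T^{(1)}$ with $T^{(k)}_i(x) = F^{-1}_i(x_i \mid x_{\PA{i}})$ on the active coordinate $i=k$ and the identity elsewhere, pushing $\Uc([0,1]^d)$ forward to $\mu$. The architecture in \cref{eq:hypercoupling} has exactly the same triangular shape, so it suffices to approximate each scalar map $u \mapsto F^{-1}_k(u \mid x_{\PA{k}})$, jointly in $(u, x_{\PA{k}})$, by a hypercoupling $g(u; \theta(x_{\PA{k}}))$.

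\textbf{Step 1 (regularity of the target maps).} The hypotheses that $\mu$ has compact support and that each conditional cdf $F_k(\cdot \mid \cdot)$ is $C^1$ with a strictly positive density on the support let me conclude that each conditional quantile map $(u, x_{\PA{k}}) \mapsto F^{-1}_k(u \mid x_{\PA{k}})$ is continuous (indeed, by the implicit function theorem, $C^1$ on the relevant compact set) and takes values in a compact interval. This is what turns an abstract measurable factorization into something amenable to uniform approximation.

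\textbf{Step 2 (uniform approximation of each scale).} On the compact domain I would invoke the universal-approximation result of \cite{leshno1993multilayer} for the hypernetwork $\theta(\cdot) \in \MLP$ together with the fact that $\IncrMLP(n)$ can uniformly approximate any continuous strictly increasing scalar function (the custom activation $\rho$ in \cref{eq:activation} was chosen precisely so that $\IncrMLP$ consists of strictly increasing piecewise-linear functions that are dense among strictly increasing continuous functions). Chaining these, for every $\delta > 0$ I obtain hypercouplings $\hat T^{(k)}$ such that $\sup \| \hat T^{(k)}_k(x) - T^{(k)}_k(x) \| \le \delta$ uniformly over the compact support propagated forward by the earlier scales. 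Setting $\hat T := \hat T^{(d)} \circ \cdots \circ \hat T^{(1)}$ and using Lipschitz continuity of the maps on compact sets, the composition error is controlled, so $\|\hat T(u) - T(u)\| \le C\delta$ uniformly in $u \in [0,1]^d$ for a constant $C$ depending only on the (finitely many) Lipschitz constants.

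\textbf{Step 3 (from uniform closeness to $W_G$).} Here is the step I expect to be the main obstacle, because $W_G$ is not merely the ordinary Wasserstein distance: I must verify that the synchronous coupling $(T(U), \hat T(U))$ induced by $U \sim \Uc([0,1]^d)$ is actually $G$-bicausal, not just a valid transport plan. This should follow because both $T$ and $\hat T$ are $G$-compatible, strictly increasing, triangular transformations driven by the \emph{same} independent noise $U$, so the coupling admits the structural representation required in the definition of $\Pi_G^{\text{bc}}$ in both directions; invertibility of each scale (guaranteed by the activation choice) is what makes the reverse direction $G$-causal as well. Granting bicausality, the coupling is admissible in the infimum defining $W_G$, and thus
$$ W_G(\mu, \push{\hat T}{\Uc([0,1]^d)}) \le \int_{[0,1]^d} \| T(u) - \hat T(u) \| \, du \le C\delta \le \varepsilon $$
upon choosing $\delta$ small enough. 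The remaining care is to confirm that $\push{\hat T}{\Uc([0,1]^d)} \in \Pc_{G,1}(\Rb^d)$ so that the left-hand side is well defined; this is immediate since $\hat T$ is a $G$-compatible transformation of a product base measure and has bounded range, giving a $G$-compatible distribution with finite first moment. Verifying the bicausality of the synchronous coupling rigorously — rather than taking it for granted from the shared triangular structure — is the crux of the argument and the only place where the special form of the $G$-causal architecture, as opposed to an arbitrary universal approximator, is genuinely used.
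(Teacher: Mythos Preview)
Your overall architecture matches the paper's: start from the exact conditional-quantile factorization of \cref{thm:G-compatible_transformations_are_universal}, couple $T(U)$ and $\hat T(U)$ synchronously via the same $U \sim \Uc([0,1]^d)$ (which is indeed $G$-bicausal for the reasons you give), and control the composition error scale by scale. The paper packages the last step as \cref{lem:composition_lemma}, which uses only the Lipschitz constants of the \emph{approximants} $\hat T^{(k)}$, so your Step~2/Step~3 logic is sound in outline.

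There are, however, two related gaps in Step~1--Step~2 that the paper's proof addresses explicitly and that your sketch glosses over.

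\textbf{Regularity of the target.} Your invocation of the implicit function theorem to get $(u,x_{\PA{k}}) \mapsto F_k^{-1}(u \mid x_{\PA{k}}) \in C^1$ requires the conditional density to be bounded away from zero, which the hypotheses do \emph{not} guarantee: only $F_k \in C^1$ is assumed, and for a compactly supported smooth density the derivative $\partial_{x_k} F_k$ typically vanishes at the boundary of the support, so the quantile map has infinite slope at $u \in \{0,1\}$. This is precisely why the paper abandons sup-norm and works in $L^1(du \otimes \mu(dx_{\PA{k}}))$, establishing the needed tensor-Sobolev bounds in \cref{lem:tensor_sobolev} from the $C^1$ assumption on the cdf (not the quantile).

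\textbf{The missing continuous-selection step.} Your ``chaining'' --- IncrMLP approximates the scalar map, then \cite{leshno1993multilayer} approximates the hypernetwork --- hides the real difficulty. To apply Leshno you first need a \emph{specific continuous} function $\theta^*: R \to \Theta(n)$ to approximate, i.e.\ a continuous choice of IncrMLP parameters realizing (or approximating) $u \mapsto F_k^{-1}(u \mid x_{\PA{k}})$ jointly in $x_{\PA{k}}$. Density of $\IncrMLP(n)$ for each fixed $x_{\PA{k}}$ does not give this: the argmin may be empty, multi-valued, or discontinuous in $x_{\PA{k}}$. The paper resolves this by inserting a tensor-product linear spline on a fixed grid as an intermediary, and then proving (\cref{lemma:continuous_selection}, via Michael's selection theorem and an explicit linear system for the second-layer weights) that on this restricted parameter set the best IncrMLP representation is unique and depends continuously on $x_{\PA{k}}$. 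Only then is Leshno applied, together with a uniform local Lipschitz estimate for $\theta \mapsto g(\cdot;\theta)$ (\cref{lem:lipschitz}) to translate closeness of parameters into closeness of outputs. Your Step~2 needs exactly this bridge; without it the argument does not go through.
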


\begin{proof} See proof in \cref{proof:UAP}.    
\end{proof}

\begin{remark}
The theorem holds for base distributions other than $\Uc([0,1]^d)$. In fact any absolutely continuous distribution on $\Rb^d$ with mutually independent coordinates (such as the standard multivariate Gaussian $\Nc(0, I_d)$) would work, provided we add a non-trainable layer between the base distribution and the first flow that maps $\Rb^d$ into the base distribution's quantiles (for $\Nc(0, I_d)$, such a map is just $\Phi^{\otimes d}$, where $\Phi$ is the standard Gaussian cdf).
\end{remark}

In practice $G$-causal normalizing flows are trained using likelihood maximization (or, equivalently, KL minimization), so it is important to make sure that minimizing this loss guarantees that the $G$-causal Wasserstein distance between data and model distribution is also minimized. The following result proves exactly this and is a generalization of \citet[Lemma 2.3]{acciaio2024time} and \citet[Lemma 3.5]{eckstein2024computational}, which established an analogous claim for the adapted Wasserstein distance.

\begin{theorem}[$W_G$ training via KL minimization]
Let $\mu, \nu \in \Pc_G(K)$ for some compact $K \subseteq \Rb^d$. Then:
\label{thm:training_loss}
$$W_G(\mu, \nu) \le C \sqrt{\frac{1}{2} \KL{\mu}{\nu}},$$
for a constant $C > 0$.
\end{theorem}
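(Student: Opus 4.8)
The plan is to exhibit a single explicit $G$-bicausal coupling whose transport cost is controlled by the square root of the relative entropy, and then conclude via the definition of $W_G$ as an infimum. We may assume $\KL{\mu}{\nu} < \infty$ (otherwise there is nothing to prove), and since $\mu, \nu$ are supported on the compact set $K$ they automatically lie in $\Pc_{G,1}(\Rb^d)$. Two classical ingredients drive the estimate. The first is the chain rule for relative entropy, which for $G$-compatible $\mu, \nu$ reads
\[
\KL{\mu}{\nu} = \sum_{i=1}^d \E{\mu}{\KL{\mu(dx_i \given x_{\PA{i}})}{\nu(dx_i \given x_{\PA{i}})}},
\]
where I use condition 2 of \cref{def:G-compatible_distribution} to replace the full history $x_{1:i-1}$ by the parents $x_{\PA{i}}$ in both conditionals. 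The second is Pinsker's inequality $\TV{\rho}{\sigma} \le \sqrt{\tfrac12 \KL{\rho}{\sigma}}$, which I will apply to the conditional laws at each node.

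First I would build the coupling node by node in the topological order $1, \ldots, d$. Writing $t_i(x_{\PA{i}}) := \TV{\mu(dx_i \given x_{\PA{i}})}{\nu(dx_i \given x_{\PA{i}})}$, I define the conditional coupling $\pi_i(dx_i, dx'_i \given x_{\PA{i}}, x'_{\PA{i}})$ of $\mu(dx_i \given x_{\PA{i}})$ and $\nu(dx'_i \given x'_{\PA{i}})$ to be a \emph{maximal} coupling whenever $x_{\PA{i}} = x'_{\PA{i}}$ (so that $\Pb(X_i \neq X'_i)$ equals $t_i(x_{\PA{i}})$) and an arbitrary coupling otherwise. Gluing these conditionals yields a coupling $\pi$; because each $\pi_i$ couples $\mu(\cdot \given x_{\PA{i}})$ with $\nu(\cdot \given x'_{\PA{i}})$ and depends only on $(x_{\PA{i}}, x'_{\PA{i}})$, the coupling $\pi$ is $G$-bicausal by the disintegration characterization of bicausal couplings. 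A measurable-selection argument guarantees that the maximal couplings can be chosen jointly measurably in the conditioning variables.

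Next I would bound the cost. Since $\supp{\mu}, \supp{\nu} \subseteq K$, we have $\|x - x'\| \le \diam{K} \sum_{i=1}^d \1\{x_i \neq x'_i\}$, so that $W_G(\mu, \nu) \le \diam{K} \sum_{i=1}^d p_i$ with $p_i := \Pb(X_i \neq X'_i)$. Splitting on whether the parents have already diverged gives the recursion $p_i \le \bar t_i + \sum_{j \in \PA{i}} p_j$, where $\bar t_i := \E{\mu}{t_i(X_{\PA{i}})}$: on $\{X_{\PA{i}} = X'_{\PA{i}}\}$ the maximal coupling fails with conditional probability $t_i(X_{\PA{i}})$, while $\{X_i \neq X'_i, X_{\PA{i}} \neq X'_{\PA{i}}\}$ forces some parent to have diverged, contributing $\sum_{j \in \PA{i}} p_j$. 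Unrolling this recursion along the finitely many directed paths of the fixed DAG $G$ yields $\sum_i p_i \le C_G \sum_i \bar t_i$ for a constant $C_G$ depending only on $G$ (for instance the maximal path-count times $d$).

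Finally I would convert $\sum_i \bar t_i$ into relative entropy. Pinsker gives $t_i(x_{\PA{i}}) \le \sqrt{\tfrac12 \KL{\mu(\cdot \given x_{\PA{i}})}{\nu(\cdot \given x_{\PA{i}})}}$; taking $\E{\mu}{\cdot}$ and using Jensen's inequality (concavity of the square root) bounds $\bar t_i$ by $\sqrt{\tfrac12 \E{\mu}{\KL{\mu(\cdot \given X_{\PA{i}})}{\nu(\cdot \given X_{\PA{i}})}}}$, after which a Cauchy--Schwarz step over the $d$ coordinates combined with the chain rule gives $\sum_i \bar t_i \le \sqrt{d}\,\sqrt{\tfrac12 \KL{\mu}{\nu}}$. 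Collecting the constants, $W_G(\mu, \nu) \le \diam{K}\, C_G \sqrt{d}\,\sqrt{\tfrac12 \KL{\mu}{\nu}}$, which is the claim with $C := \diam{K}\, C_G \sqrt{d}$. I expect the main obstacle to be the rigorous verification that the glued maximal coupling is genuinely $G$-\emph{bi}causal (in both orientations) and jointly measurable in the conditioning variables; the recursion bookkeeping over the DAG is routine once the per-node divergence estimate is established.
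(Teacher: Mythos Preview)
Your argument is correct and genuinely different from the paper's. The paper first bounds $W_G(\mu,\nu)\le\diam{K}\cdot\GTV{\mu}{\nu}$, then proves by induction on $|V|$ that $\GTV{\mu}{\nu}\le(2^d-1)\TV{\mu}{\nu}$ (the coupling at the inductive step is the same maximal coupling you write down), and only at the very end applies Pinsker once to the \emph{joint} total variation. You instead push Pinsker inside: you decompose $\KL{\mu}{\nu}$ node by node via the chain rule for $G$-compatible measures, apply Pinsker to each conditional, and control the accumulated disagreement probabilities $p_i$ through the DAG recursion $p_i\le\bar t_i+\sum_{j\in\PA{i}}p_j$. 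The payoff of your route is a constant $C=\diam{K}\,C_G\sqrt{d}$ with $C_G$ governed by the directed-path count of $G$, which for sparse graphs (e.g.\ a chain, where $C_G\le d$) is polynomial in $d$ rather than the paper's uniform $2^d-1$; the paper's route, on the other hand, is more modular since it isolates a clean metric comparison $\GTV{\cdot}{\cdot}\lesssim\TV{\cdot}{\cdot}$ that is of independent interest and does not rely on the KL chain rule. Your stated concern about measurable selection and bicausality of the glued maximal coupling is legitimate but minor: the paper sidesteps it by writing the maximal coupling explicitly as the diagonal push-forward of $\mu(\cdot\given x_{\PA{i}})\wedge\nu(\cdot\given x_{\PA{i}})$ plus the product of the residuals, which is manifestly jointly measurable and yields a kernel depending only on $(x_{\PA{i}},x'_{\PA{i}})$---hence bicausal by the product-disintegration characterization.
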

\begin{proof} See proof in \cref{proof:training_loss}.    
\end{proof}

\section{Numerical experiments}

\subsection{Causal regression}

We study a multivariate causal regression problem of the form:
\begin{equation}
\label{eq:causal_regression_experiment}
\min_{\substack{h: \Rb^{|V \setminus T|} \to \Rb^{|T|} \\ \text{$h$ is $G$-causal}}} \E{\mu}{(X_T - h(X_{V \setminus T}))^2},
\end{equation}
where $\mu \in \Pc_G(\Rb^d)$ is a randomly generated linear Gaussian SCM \citep[Section 7.1.3]{peters2017elements} with coefficients uniformly sampled in $(-1,1)$ and homoscedastic noise with unit variance. The sorted DAG $G$ is obtained by randomly sampling an Erdos-Renyi graph on $d$ vertices with edge probability $p$ and eliminating all edges $(i,j)$ with $i > j$.

According to \cref{thm:robustness}, any solution to problem \eqref{eq:causal_regression_experiment} is interventionally robust. In order to showcase this robustness property of the $G$-causal regressor, we compare its performance with that of a standard (i.e. non-causal) regressor when tested out-of-sample on a large number of random soft\footnote{A soft intervention at a node $i \in V$ leaves its parents and noise distribution unaltered, but changes the functional form of its causal mechanism.} interventions. Each intervention is obtained by randomly sampling a node $i \in V \setminus T$ and substituting its causal mechanism, $f(X_{\PA{i}}, U_i)$, with a new one, $\tilde{f}(X_{|\PA{i}}, U_i)$. We consider only linear interventions and quantify their interventional strength by computing the following $L^1$-norm: 
$$\int \int |f(x_{\PA{i}}, u) - \tilde{f}(x_{\PA{i}}, u)| \mu (dx_{\PA{i}}) \lambda(du),$$
where $\mu$ is the original distribution (before intervention) and $\lambda$ is the noise distribution. Interventional strength, therefore, quantifies the out-of-sample variation of the regressor's inputs under the intervention.

We implement a multivariate regression with $d=10$, $p=0.5$ and $T = \{5,6\}$. We report in \cref{fig:reg_augmentation_MSE} and \cref{fig:reg_augmentation_R2} the worst-case performance of a $G$-causal regressor and of a non-causal regressor (in terms of MSE and $R^2$, respectively) as a function of the interventional strength. At small interventional strengths the non-causal regressor benefits from the information contained in non-parent nodes (which are not available as inputs to the $G$-causal optimizer). These non-parent nodes may belong to the Markov blanket of the target nodes in $G$ and therefore be statistically informative, but their usefulness crucially depends on the stability of their causal mechanisms. As the interventional strength is increased the worst-case performance of the non-causal regressor rapidly deteriorates, while that of the $G$-causal regressor remains stable, as shown in the figures.

\begin{figure}[h]
  \centering
  \begin{minipage}[b]{0.46\textwidth}
    \centering
    \includegraphics[width=\linewidth]{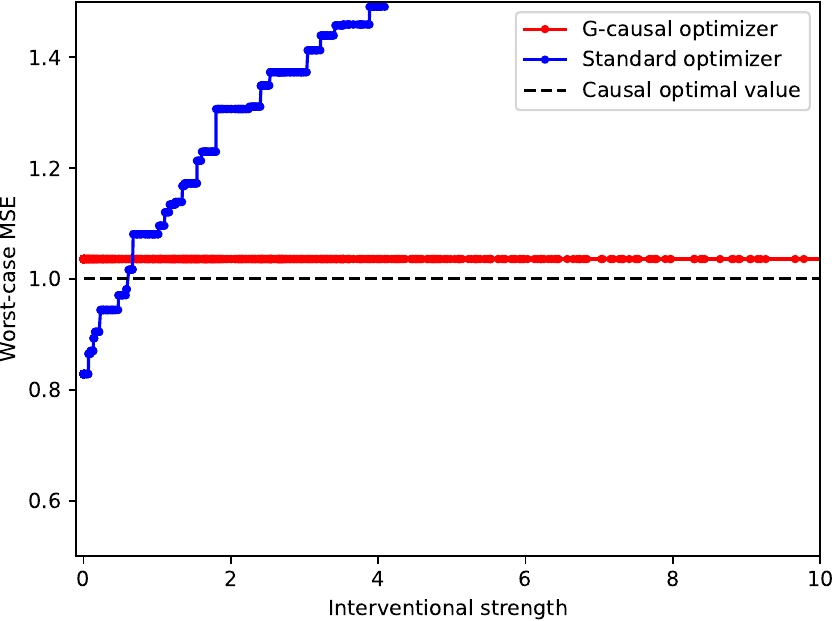}
    \caption{Worst-case MSE vs interventional strength.}
    \label{fig:reg_robustness_worstcase_MSE}
  \end{minipage}
  \hspace{0.05\textwidth} % space between images
  \begin{minipage}[b]{0.46\textwidth}
    \centering
    \includegraphics[width=\linewidth]{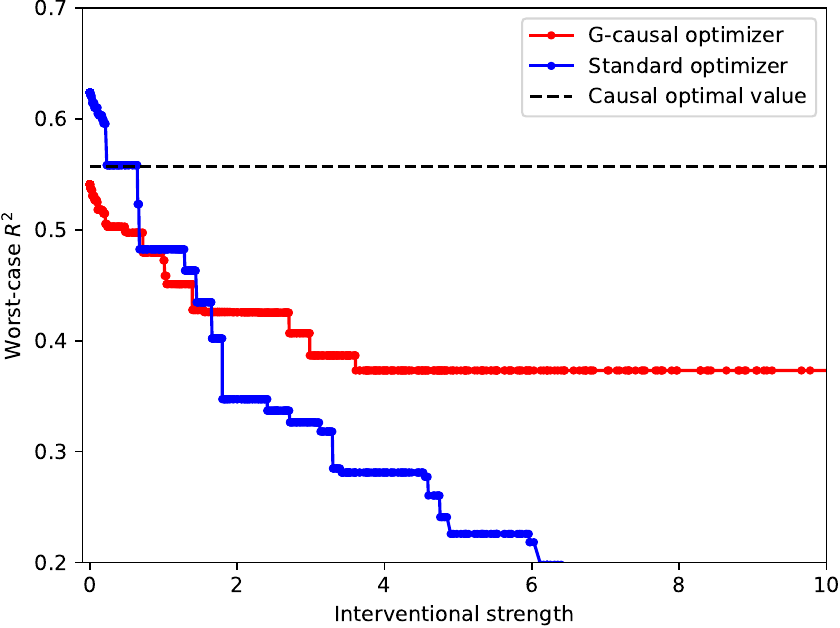}
    \caption{Worst-case $R^2$ vs interventional strength.}
    \label{fig:reg_robustness_worstcase_R2}
  \end{minipage}
\end{figure}

In \cref{fig:reg_robustness_quantiles_MSE} and \cref{fig:reg_robustness_quantiles_R2} we deepen the comparison by plotting the distribution of the performance metrics (MSE and $R^2$, respectively) for both estimators. Notice how interventions deteriorate the performance of the non-causal regressor starting from the least favorable quantiles, while the entire distribution of the performance metrics of the $G$-causal remains stable. These figures also show that the median performance of the causal regressor is, after all, not strongly affected by the linear random interventions we consider. In this sense, non-causal optimizers can still be approximately optimal in applications where distributional shifts are expected to be mild.

\begin{figure}[h]
  \centering
  \begin{minipage}[b]{0.46\textwidth}
    \centering
    \includegraphics[width=\linewidth]{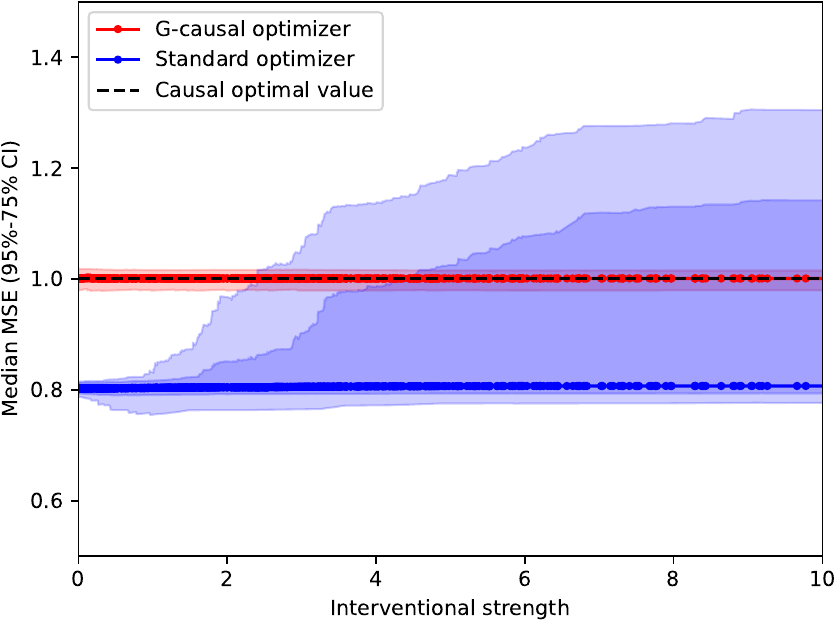}
    \caption{Median and (75\%-95\%) CI of MSE vs interventional strength.}
    \label{fig:reg_robustness_quantiles_MSE}
  \end{minipage}
  \hspace{0.05\textwidth} % space between images
  \begin{minipage}[b]{0.46\textwidth}
    \centering
    \includegraphics[width=\linewidth]{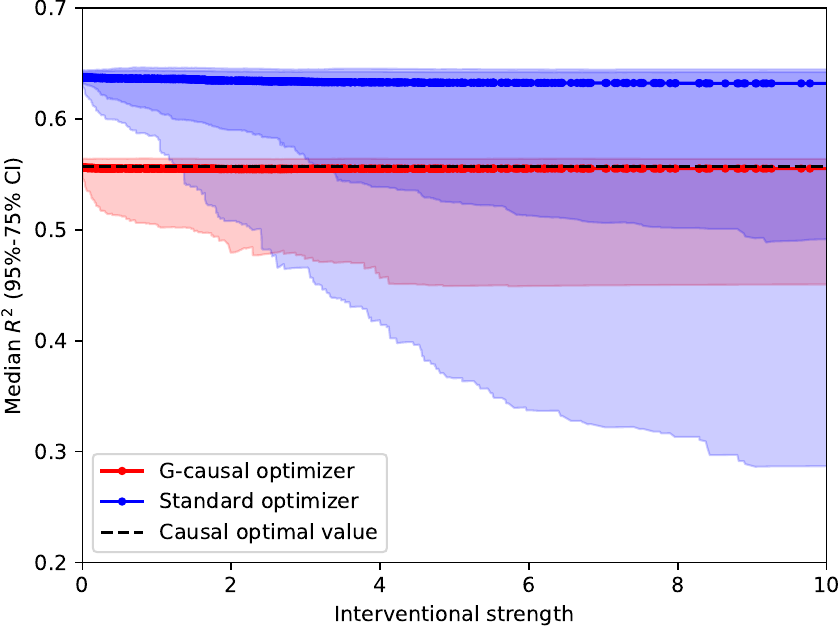}
    \caption{Median and (75\%-95\%) CI of $R^2$ vs interventional strength.}
    \label{fig:reg_robustness_quantiles_R2}
  \end{minipage}
\end{figure}

Finally, we investigate the performance of our $G$-causal normalizing flow model when used for generative data augmentation. We therefore train several augmentation models (both non-causal and $G$-causal) on a training set of $n=10000$ samples from $\mu$. We then use them to generate of synthetic training set of $n=10000$ samples and we train a causal optimizer on it. 

As shown in \cref{fig:reg_augmentation_MSE} and \cref{fig:reg_augmentation_R2}, causal optimizers trained using non-causal augmentation models (e.g. RealNVP and VAE) are indeed robust under interventions, but their worst-case metrics are significantly worse than when causal augmentation is used. This is an empirical validation of the fact that the loss used for training the augmentation model plays a crucial role in downstream performance.

\begin{figure}[h]
  \centering
  \begin{minipage}[b]{0.46\textwidth}
    \centering
    \includegraphics[width=\linewidth]{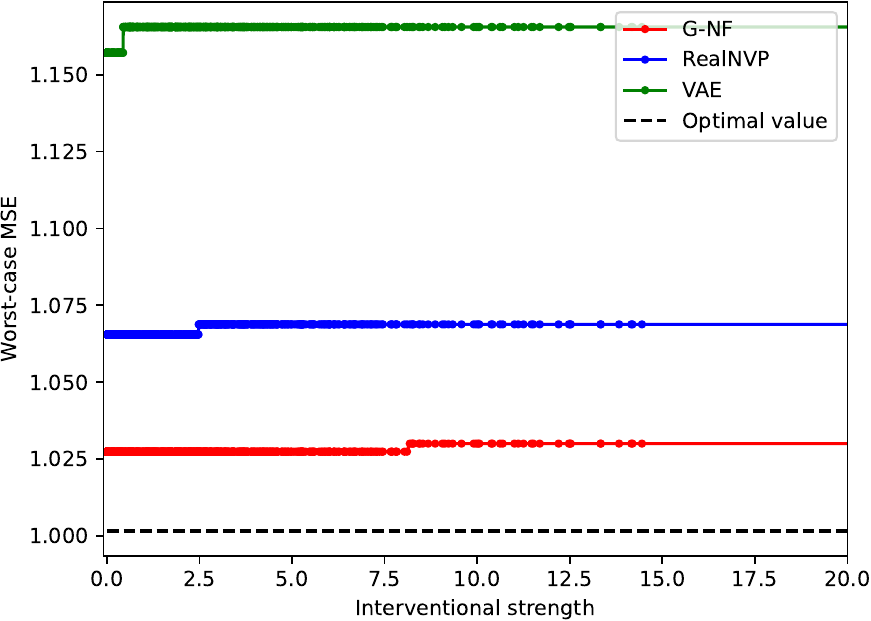}
    \caption{Worst-case MSE after generative data augmentation vs interventional strength.}
    \label{fig:reg_augmentation_MSE}
  \end{minipage}
  \hspace{0.05\textwidth} % space between images
  \begin{minipage}[b]{0.46\textwidth}
    \centering
    \includegraphics[width=\linewidth]{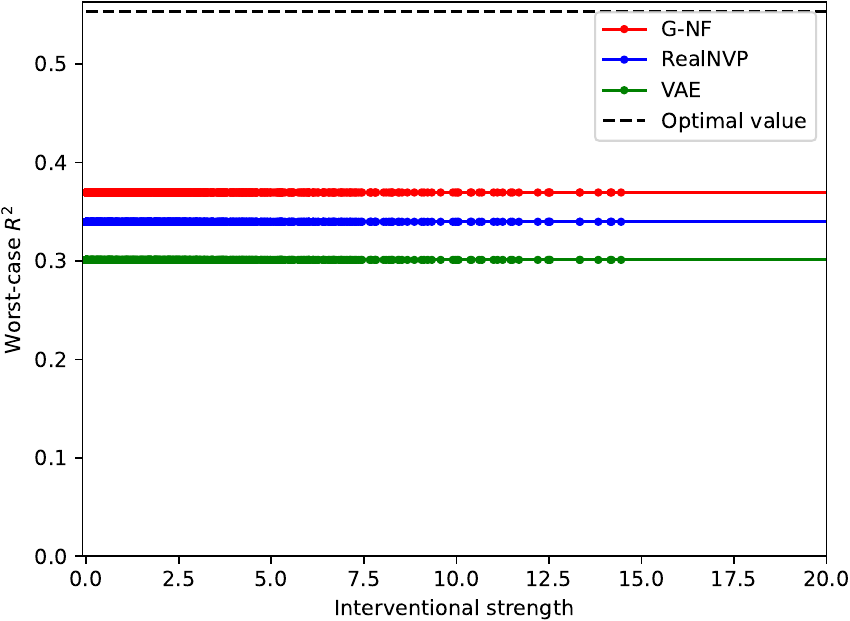}
    \caption{Worst-case $R^2$ after generative data augmentation vs interventional strength.}
    \label{fig:reg_augmentation_R2}
  \end{minipage}
\end{figure}

\subsection{Conditional mean-variance portfolio optimization}

We look at the following conditional mean-variance portfolio optimization problem:
$$\Vc(\mu) = \inf_{\substack{h: \Rb^{|V \setminus T|} \to \Rb^{|T|} \\ \text{$h$ is $G$-causal}}} \left\{ - \E{\mu}{\langle X_T, h(X_{V \setminus T}) \rangle} + \frac{\gamma}{2} \Var{\mu}{\langle X_T, h(X_{V \setminus T})\rangle} \right\},$$
where $X \sim \mu \in \Pc_G(\Rb^d)$ is a linear Gaussian SCM, with bipartite DAG $G$ with partition $\{T, V \setminus T\}$ and random uniform coefficients in $(-1,1)$, and $\gamma$ is a given risk aversion parameter. The target variables $X_T$ represent stock returns, while $X_{V \setminus T}$ are market factors or trading signals. We present the results for a high-dimensional example with $|T|=100$ stocks and $|V \setminus T| = 20$ factors.

We sample random linear interventions exactly as done in the case of causal regression and study empirically the robustness of the $G$-causal portfolio in terms of its Sharpe ratio as the interventional strength increases.

\cref{fig:portfolio_robustness_worstcase_SR} and \cref{fig:portfolio_robustness_quantiles_SR} show that the Sharpe ratio of the $G$-causal portfolio is indeed robust to a wide range of interventions, while the performance of non-causal portfolios deteriorates rapidly, starting from the least favorable quantiles.

\begin{figure}[h]
  \centering
  \begin{minipage}[b]{0.46\textwidth}
    \centering
    \includegraphics[width=\linewidth]{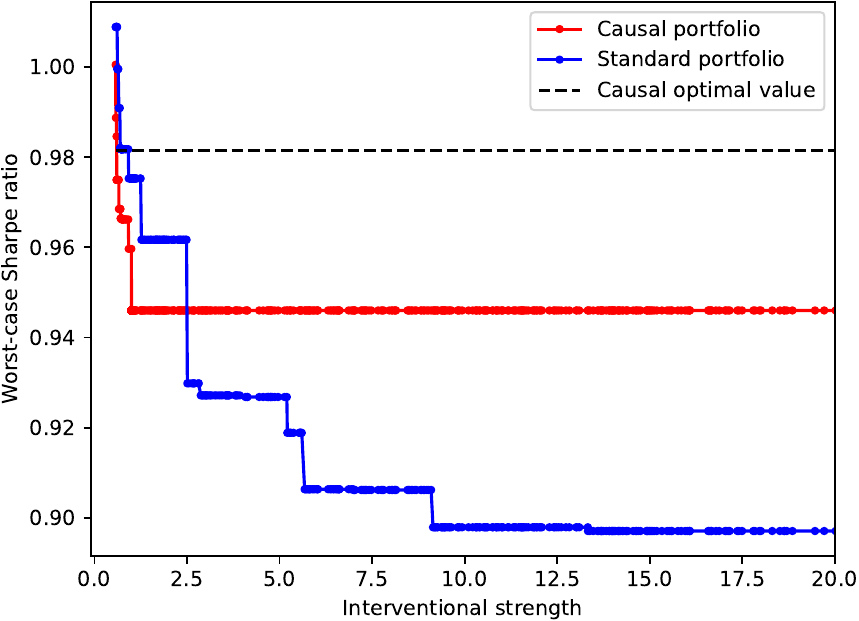}
    \caption{Worst-case Sharpe ratio vs interventional strength}
    \label{fig:portfolio_robustness_worstcase_SR}
  \end{minipage}
  \hspace{0.05\textwidth} % space between images
  \begin{minipage}[b]{0.46\textwidth}
    \centering
    \includegraphics[width=\linewidth]{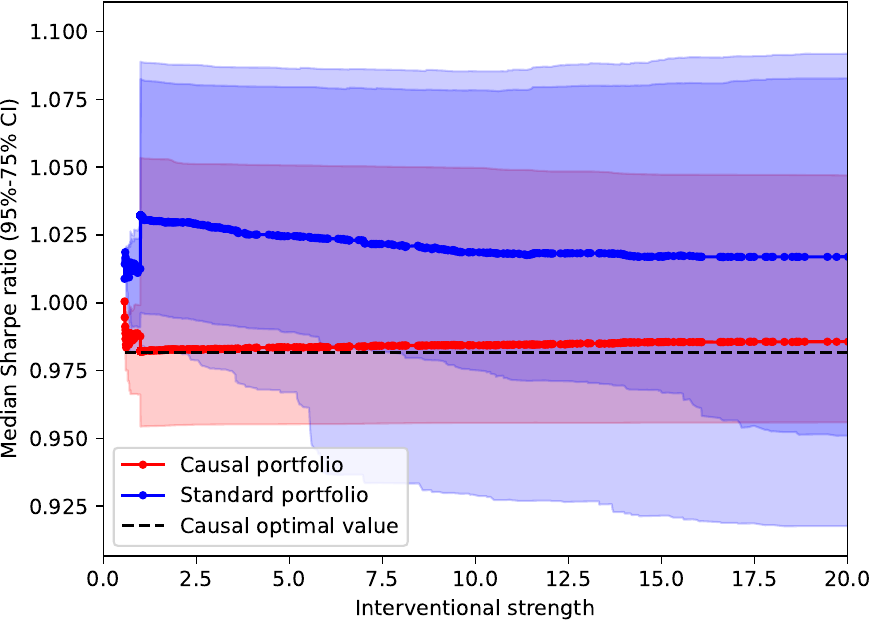}
    \caption{Median and (75\%-95\%) CI of Sharpe ratio vs interventional strength}
    \label{fig:portfolio_robustness_quantiles_SR}
  \end{minipage}
\end{figure}

\textbf{Reproducibility statement.} All results can be reproduced using the source code provided in the Supplimentary Materials. Demo notebooks of the numerical experiments will be made available in a paper-related GitHub repository upon publication.

% \subsubsection*{Author Contributions}
% If you'd like to, you may include  a section for author contributions as is done
% in many journals. This is optional and at the discretion of the authors.

% \subsubsection*{Acknowledgments}
% Use unnumbered third level headings for the acknowledgments. All
% acknowledgments, including those to funding agencies, go at the end of the paper.

\bibliography{refs}
\bibliographystyle{iclr2026_conference}

\appendix

\section{Auxiliary results}

\begin{lemma}[Interchangeability principle]
\label{lem:interchangeability}
Let $(\Omega, \Fc, \Pb)$ be a probability space and let $f:\Omega \times \Rb^d \to \overline{\Rb}$ be an $\Fc$-measurable normal integrand. Then:
$$\int \min_{x \in \Rb^d} f(\omega, x) \Pb(d\omega) = \min_{X \in m\Fc} \int f(\omega, X(\omega)) \Pb(d\omega),$$
provided that the right-hand side is not $\infty$.

Furthermore, if both sides are not $-\infty$, then:
$$ X^* \in \argmin_{X \in m\Fc} \int f(\omega, X(\omega)) \Pb(d\omega) \Longleftrightarrow X^*(\omega) \in \argmin_{x \in \Rb^d} f(\omega, x), \:\: \text{($\mu$-almost surely)}$$
\end{lemma}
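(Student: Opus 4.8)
The plan is to prove the two inequalities separately and then deduce the $\argmin$ characterization from the resulting equality of the two sides. First I would establish the easy direction $\ge$: for any $X \in m\Fc$ and every $\omega$ one has $f(\omega, X(\omega)) \ge \min_{x \in \Rb^d} f(\omega, x)$ by definition of the pointwise minimum, so integrating in $\omega$ and then taking the infimum over $X$ yields $\min_{X \in m\Fc} \int f(\omega, X(\omega)) \Pb(d\omega) \ge \int \min_{x} f(\omega, x) \Pb(d\omega)$. This step is purely pointwise and needs nothing beyond measurability of the maps involved.

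The reverse inequality $\le$ is where the normal-integrand hypothesis is essential. Here I would invoke the classical measurable selection theorem for normal integrands: since $f$ is a normal integrand, the infimal map $\omega \mapsto \inf_{x} f(\omega, x)$ is $\Fc$-measurable and the multifunction $\omega \mapsto \argmin_{x} f(\omega, x)$ is $\Fc$-measurable with closed values, so wherever it is nonempty it admits an $\Fc$-measurable selection $X^*$ (e.g. Rockafellar--Wets, \emph{Variational Analysis}, Theorem 14.37). By construction $f(\omega, X^*(\omega)) = \min_{x} f(\omega, x)$, hence $\min_{X} \int f(\omega, X(\omega)) \Pb(d\omega) \le \int f(\omega, X^*(\omega)) \Pb(d\omega) = \int \min_{x} f(\omega, x) \Pb(d\omega)$. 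If the pointwise minimum fails to be attained on a nonnull set, one instead takes a sequence of $\varepsilon_n$-approximate measurable selectors with $\varepsilon_n \downarrow 0$ and passes to the limit; the hypothesis that the right-hand side is not $+\infty$ is exactly what guarantees these selectors are genuinely integrable and the limiting value is finite. Combining the two inequalities gives the claimed equality and, in particular, shows the infimum on the right is attained by $X^*$, which justifies writing $\min$.

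For the $\argmin$ characterization I would argue from this equality of integrals. The implication $(\Leftarrow)$ is immediate: if $X^*(\omega) \in \argmin_{x} f(\omega, x)$ $\Pb$-a.s., then $f(\omega, X^*(\omega)) = \min_{x} f(\omega, x)$ a.s., so its integral equals the common value and $X^*$ is an integral minimizer. For $(\Rightarrow)$, set $g(\omega) := f(\omega, X^*(\omega)) - \min_{x} f(\omega, x) \ge 0$; by the first part and the assumption that $X^*$ is optimal, $\int g \, d\Pb = 0$, and since both sides are assumed not $-\infty$ the difference $g$ is a well-defined nonnegative integrable function, which therefore vanishes $\Pb$-a.s., giving $X^*(\omega) \in \argmin_{x} f(\omega, x)$ a.s.

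The hard part will be the reverse inequality, and specifically the construction of a measurable (or $\varepsilon$-approximately measurable) minimizing selection $X^*$: this is precisely the content that the ``normal integrand'' hypothesis is designed to supply. The only other delicate point is the integrability and finiteness bookkeeping — keeping track of the assumption that the right-hand side is not $+\infty$ (for the selection step) and that both sides are not $-\infty$ (so that $g$ is well-defined in the $\argmin$ argument) — which is routine but must be handled carefully to avoid ill-defined differences of extended reals.
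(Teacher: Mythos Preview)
The paper does not prove this lemma at all: its entire proof is the one-line citation ``See \citet[Theorem 14.60]{rockafellar1998variational}.'' Your proposal is a correct sketch of the standard proof of that very theorem --- the easy inequality by pointwise domination, the reverse inequality via a measurable (or $\varepsilon$-approximate) selection furnished by the normal-integrand machinery (Rockafellar--Wets, Theorem~14.37), and the $\argmin$ characterization by the nonnegative-integrand-with-zero-integral argument --- so you have in fact supplied more than the paper does.
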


\begin{proof}
See \citet[Theorem 14.60]{rockafellar1998variational}.
\end{proof}

\begin{lemma}[Composition lemma]
\label{lem:composition_lemma}
Let $(\Xc, \|\cdot\|)$ be a Banach space with its Borel $\sigma$-algebra and let $\mu^{(0)}, \ldots, \mu^{(d)}$ be measures defined on it. Given measurable maps $\hat{T}^{(k)}: \Xc \to \Xc$ and $T^{(k)}: \Xc \to \Xc$ such that $ \push{T^{(k)}}{\mu^{(k-1)}}= \mu^{(k)}$ (for $k = 1, \ldots, d$), if the following two conditions hold:
\begin{enumerate}[i)]
    \item $\hat{T}^{(k)}$ is $L_k$-Lipschitz,
    \item $\| T^{(k)} - \hat{T}^{(k)} \|_{L^p(\mu^{(k-1)})} \le \varepsilon_k$,
\end{enumerate}
then:
$$ \| T^{(d)} \circ \cdots \circ T^{(1)} - \hat{T}^{(d)} \circ \cdots \circ \hat{T}^{(1)} \|_{L^p(\lambda)} \le \sum_{k=1}^d \varepsilon_k \prod_{j=k+1}^d L_j,$$
with the convention that $\prod_{j \in \emptyset} L_j := 1.$
\end{lemma}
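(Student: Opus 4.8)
The plan is to prove the bound by a telescoping (hybrid) argument: I interpolate between the two full compositions by swapping one factor at a time, from $\hat{T}^{(k)}$ to $T^{(k)}$, and control each single swap using the Lipschitz assumption (i) on the outer factors and the $L^p$ estimate (ii) on the swapped factor. Throughout I write $\lambda := \mu^{(0)}$ for the measure appearing on the left-hand side, and I abbreviate $S := T^{(d)}\circ\cdots\circ T^{(1)}$ and $\hat{S} := \hat{T}^{(d)}\circ\cdots\circ\hat{T}^{(1)}$.

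First I would define, for $k=0,1,\ldots,d$, the hybrid map
$$\Phi_k := \hat{T}^{(d)}\circ\cdots\circ\hat{T}^{(k+1)}\circ T^{(k)}\circ\cdots\circ T^{(1)},$$
with the conventions that $\Phi_0=\hat{S}$ and $\Phi_d=S$, so that $S-\hat{S}=\sum_{k=1}^d(\Phi_k-\Phi_{k-1})$. The two consecutive hybrids $\Phi_k$ and $\Phi_{k-1}$ share the inner block $B_k:=T^{(k-1)}\circ\cdots\circ T^{(1)}$ and the outer block $A_k:=\hat{T}^{(d)}\circ\cdots\circ\hat{T}^{(k+1)}$, differing only in whether the $k$-th slot applies $T^{(k)}$ or $\hat{T}^{(k)}$.

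Next I would estimate the $k$-th telescoping term. By (i) each $\hat{T}^{(j)}$ is $L_j$-Lipschitz, so the outer block $A_k$ is Lipschitz with constant $\prod_{j=k+1}^d L_j$; hence pointwise
$$\bigl\|\Phi_k(x)-\Phi_{k-1}(x)\bigr\| \le \Bigl(\textstyle\prod_{j=k+1}^d L_j\Bigr)\,\bigl\|T^{(k)}(B_k(x))-\hat{T}^{(k)}(B_k(x))\bigr\|.$$
Raising to the $p$-th power, integrating against $\lambda$, and applying the change-of-variables formula together with $\push{B_k}{\lambda}=\mu^{(k-1)}$ (which follows by composing the pushforward identities $\push{T^{(j)}}{\mu^{(j-1)}}=\mu^{(j)}$, the empty composition $B_1$ giving $\lambda=\mu^{(0)}$), I obtain
$$\int \bigl\|T^{(k)}(B_k(x))-\hat{T}^{(k)}(B_k(x))\bigr\|^p\,\lambda(dx) = \|T^{(k)}-\hat{T}^{(k)}\|_{L^p(\mu^{(k-1)})}^p \le \varepsilon_k^p,$$
so that $\|\Phi_k-\Phi_{k-1}\|_{L^p(\lambda)}\le \varepsilon_k\prod_{j=k+1}^d L_j$.

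Finally I would combine the terms via the triangle inequality in $L^p(\lambda)$:
$$\|S-\hat{S}\|_{L^p(\lambda)} \le \sum_{k=1}^d \|\Phi_k-\Phi_{k-1}\|_{L^p(\lambda)} \le \sum_{k=1}^d \varepsilon_k\prod_{j=k+1}^d L_j,$$
which is the claimed estimate. I expect the only delicate point to be the bookkeeping in the telescoping step: correctly recognizing that each single swap is insulated by a Lipschitz outer block (yielding the factor $\prod_{j=k+1}^d L_j$) and an inner block that pushes $\lambda$ forward to exactly $\mu^{(k-1)}$, so that assumption (ii) applies verbatim after the change of variables. The empty-product convention handles the top term $k=d$, where the outer block is the identity and the prefactor is simply $1$.
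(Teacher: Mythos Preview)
Your proposal is correct and follows essentially the same approach as the paper: the paper proves the estimate by induction on $d$, peeling off the outermost factor at each step via the split $T^{(d)}\circ B_d - \hat{T}^{(d)}\circ B_d$ (change of variable, giving $\varepsilon_d$) plus $\hat{T}^{(d)}\circ B_d - \hat{T}^{(d)}\circ \hat{B}_d$ (Lipschitz, giving $L_d$ times the $(d-1)$-case), which when unrolled is precisely your hybrid telescoping $\sum_k(\Phi_k-\Phi_{k-1})$. The bookkeeping you flag---outer Lipschitz block yielding $\prod_{j>k}L_j$ and inner pushforward block yielding $\mu^{(k-1)}$---is exactly what the paper's induction step encodes.
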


\begin{proof} The claim follows by induction. It is obviously true for $d=1$. Assume that it holds for $d-1$, then for $d$:
\begin{align*}
& \| T^{(d)} \circ \cdots \circ T^{(1)} - \hat{T}^{(d)} \circ \cdots \circ \hat{T}^{(1)} \|_{L^p(\mu^{(0)})} \\ \le & \| T^{(d)} \circ T^{(d-1)} \circ \cdots \circ T^{(1)} - \hat{T}^{(d)} \circ T^{(d-1)} \circ \cdots \circ T^{(1)} \|_{L^p(\mu^{(0)})} + \\
& \| \hat{T}^{(d)} \circ T^{(d-1)} \circ \cdots \circ T^{(1)} - \hat{T}^{(d)} \circ \hat{T}^{(d-1)} \circ \cdots \circ \hat{T}^{(1)} \|_{L^p(\mu^{(0)})} \\
\le & \| T^{(d)} - \hat{T}^{(d)} \|_{L^p(\mu^{(d-1)})} + L_d \| T^{(d-1)} \circ \cdots \circ T^{(1)} - \hat{T}^{(d-1)} \circ \cdots \circ \hat{T}^{(1)} \|_{L^p(\mu^{(0)})} \tag{Change of variable + Lipschitz} \\
\le & \varepsilon_d + L_d \cdot \sum_{k=1}^{d-1} \varepsilon_k \prod_{j=k+1}^{d-1} L_j \tag{claim holds of $d-1$} \\
= & \sum_{k=1}^d \varepsilon_k \prod_{j=k+1}^d L_j
\end{align*}
\end{proof}

\begin{lemma}
\label{lem:continuity_in_parameters}
Let $g \in \IncrMLP(n)$ with parameter space $\Theta(n)$. Then the map $\theta \mapsto g(\cdot; \theta)$ from $\Theta(n)$ to $L^1([0,1])$ is continuous.
\end{lemma}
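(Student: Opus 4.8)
The plan is to show that the map $\theta \mapsto g(\cdot;\theta)$ is in fact \emph{locally Lipschitz} from $\Theta(n)$ into $L^1([0,1])$, which immediately yields continuity. The two structural facts I would exploit are: (i) the activation $\rho$ is globally Lipschitz, and (ii) the domain $[0,1]$ is bounded, so that $|x| \le 1$ and Lebesgue measure on it has finite (indeed unit) mass.

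First I would record that $\rho$ is $1$-Lipschitz. Since $\alpha \in (0,1)$ we have $|\alpha-1| = 1-\alpha < 1$, so each $\text{LeakyReLU}_{\alpha-1}$ has both its slopes bounded by $1$ in absolute value and is therefore $1$-Lipschitz; as $\rho$ is $\tfrac12$ times a difference of two such maps precomposed with the slope-$\pm1$ affine maps $x \mapsto 1\pm x$, the triangle inequality gives that $\rho$ is $(\tfrac12+\tfrac12)$-Lipschitz. I would then fix the target $\theta = (w^{(1)}, b^{(1)}, w^{(2)}, b^{(2)})$ and restrict attention to competitors $\tilde\theta = (\tilde{w}^{(1)}, \tilde{b}^{(1)}, \tilde{w}^{(2)}, \tilde{b}^{(2)})$ in a bounded neighborhood $B$ of $\theta$; on $B \times [0,1]$ all arguments $\tilde{w}^{(1)}_i x + \tilde{b}^{(1)}_i$ of $\rho$ lie in a fixed compact interval, so $\rho$ is uniformly bounded there by a constant $M = M(B)$.

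The core estimate is a pointwise-in-$x$ bound on $|g(x;\tilde\theta) - g(x;\theta)|$. For each neuron $i$ I would insert the intermediate term $w^{(2)}_i \rho(\tilde{w}^{(1)}_i x + \tilde{b}^{(1)}_i)$ and split
$$
\tilde{w}^{(2)}_i \rho(\tilde{w}^{(1)}_i x + \tilde{b}^{(1)}_i) - w^{(2)}_i \rho(w^{(1)}_i x + b^{(1)}_i)
= (\tilde{w}^{(2)}_i - w^{(2)}_i)\rho(\tilde{w}^{(1)}_i x + \tilde{b}^{(1)}_i)
+ w^{(2)}_i\big(\rho(\tilde{w}^{(1)}_i x + \tilde{b}^{(1)}_i) - \rho(w^{(1)}_i x + b^{(1)}_i)\big).
$$
I would bound the first summand by $M\,|\tilde{w}^{(2)}_i - w^{(2)}_i|$ and, using the $1$-Lipschitz property of $\rho$ together with $|x| \le 1$, the second by $|w^{(2)}_i|\,\big(|\tilde{w}^{(1)}_i - w^{(1)}_i| + |\tilde{b}^{(1)}_i - b^{(1)}_i|\big)$. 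Adding the $|\tilde{b}^{(2)} - b^{(2)}|$ term and summing over $i=1,\ldots,n$ gives $\sup_{x \in [0,1]} |g(x;\tilde\theta) - g(x;\theta)| \le C\,\|\tilde\theta - \theta\|$ for a constant $C = C(B)$. Since Lebesgue measure on $[0,1]$ has total mass one, this uniform bound integrates to $\|g(\cdot;\tilde\theta) - g(\cdot;\theta)\|_{L^1([0,1])} \le C\,\|\tilde\theta - \theta\|$, establishing local Lipschitz continuity and hence the claim.

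I do not expect a genuine obstacle here; the only point requiring care is that $g$ is \emph{bilinear} in the sense that the output weights $w^{(2)}_i$ multiply the nonlinear hidden activations, so a naive global Lipschitz bound fails unless one first localizes to a bounded neighborhood of $\theta$ in order to control both $\rho$ and the factors $|w^{(2)}_i|$ — this is precisely the role played by the set $B$ and the constant $M$. An equally clean alternative would be to observe that $g$ is jointly continuous on $[0,1] \times \Theta(n)$, deduce pointwise convergence $g(x;\theta^{(m)}) \to g(x;\theta)$ along any sequence $\theta^{(m)} \to \theta$, and then invoke dominated convergence with the uniform bound $2M$ as dominating function.
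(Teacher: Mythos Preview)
Your proposal is correct. The primary route you take---establishing a uniform-in-$u$ local Lipschitz bound $|g(u;\tilde\theta)-g(u;\theta)|\le C(B)\|\tilde\theta-\theta\|$ and then integrating---is not the argument the paper gives for this lemma, but it is essentially the content of the paper's \emph{next} auxiliary lemma (\cref{lem:lipschitz}), which the paper proves separately via the same add-and-subtract decomposition and Cauchy--Schwarz. For the present lemma the paper instead uses dominated convergence directly: pointwise convergence $g(u;\theta_k)\to g(u;\theta)$ from continuity of $\theta\mapsto g(u;\theta)$, together with the dominating bound $\sup_k|g(u;\theta_k)|\le\sup_k\max\{|g(0;\theta_k)|,|g(1;\theta_k)|\}$, which exploits the \emph{monotonicity} of $u\mapsto g(u;\theta)$ rather than a bound on $\rho$. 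Your approach buys a quantitative (Lipschitz) statement and effectively merges the two lemmas into one; the paper's approach is slightly slicker for this lemma in isolation because it sidesteps any explicit estimate on $\rho$, using only that $g$ is increasing on $[0,1]$. The DCT alternative you sketch at the end is close to the paper's argument, though your proposed dominating constant comes from bounding $\rho$ on a compact range, whereas the paper's comes from monotonicity and endpoint values.
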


\begin{proof}
It is a direct application of Lebesgue's dominated convergence theorem \cite[Theorem 2.8.1]{bogachev2007measure}, so we just verify that the assumptions of the theorem hold. Let $\theta_k \to \theta \in \Theta$ be any convergent sequence. Since $\theta \mapsto g(u; \theta)$ is continuous, we have that $g(u; \theta_k) \to g(u; \theta)$ for all $u \in [0,1]$. Furthermore, the functions $g(\cdot; \theta_k)$ are uniformly bounded: 
\begin{align*}
    \sup_{k \in \Nb} |g(u; \theta_k)| & \le \sup_{k \in \Nb} \sup_{u \in [0,1]} |g(u; \theta_k)| \\
    & \le \sup_{k \in \Nb} \max\{|g(0; \theta_k)|,|g(1; \theta_k)|\} \tag{$u \mapsto g(u; \theta)$ is increasing} \\
    & \le \sup_{\theta \in K} \max\{|g(0; \theta)|,|g(1; \theta)|\} \\
    & < +\infty
\end{align*}
where $K \subseteq \Theta$ is any compact containing the sequence $\{\theta_k, k \in \Nb\}$ (which exists because the sequence is convergent) and the last inequality follows from the fact that $\theta \mapsto \max\{|g(0; \theta)|,|g(1; \theta)|\}$ is continuous (it's the minimum of two continuous functions) and therefore bounded on $K$.
\end{proof}

\begin{lemma}
\label{lemma:continuous_selection}
Let $R \subseteq \Rb^k$ be a compact set and let the functions $f(\cdot, x):[a,b] \to \Rb$ be continuous, linear splines on a common grid $a = u_1 < \ldots < u_{n+1} = b$, for every $x \in R$. Then there exists a subset $\Theta \subseteq \Theta(n)$ (which depends only on the grid) such that the set-valued function $\tilde{\theta}:R \rightrightarrows \Theta$, defined by
$$ \tilde{\theta}(x_{\PA{k}}) := \argmin_{\theta' \in \Theta} \; \| \hat{f}(\cdot, x_{\PA{k}}) - g(\cdot, \theta')) \|_{L^1([0,1])}, \quad \forall x_{\PA{k}} \in R$$
admits a continuous selection $\theta: R \to \Theta$, such that $g(u, \theta(x)) = \hat{f}(u, x)$ for all $u \in [0,1]$.
\end{lemma}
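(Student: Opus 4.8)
The plan is to reduce this (a priori set-valued, infinite-dimensional) selection problem to a finite-dimensional, essentially linear-algebraic one by exploiting the rigid structure of $\IncrMLP(n)$. First I would fix the inner parameters $(w^{(1)}, b^{(1)})$ of $g$ as explicit functions of the grid \emph{alone}, chosen so that the two kinks of each activation $\rho(w^{(1)}_i x + b^{(1)}_i)$ fall on grid knots (e.g. so that the $i$-th active band coincides with the $i$-th grid cell). This makes every $g(\cdot,\theta)$ a continuous, strictly increasing, piecewise-linear spline whose knots are contained in $\{u_1,\dots,u_{n+1}\}$, and it is exactly here that the requirement ``$\Theta$ depends only on the grid'' is met: I take $\Theta \subseteq \Theta(n)$ to be the slice with these grid-determined inner parameters, leaving $(w^{(2)}, b^{(2)}) \in \Rbpos^n \times \Rb$ free. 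A key consequence is that, with the inner parameters frozen, $g(\cdot,\theta)$ depends \emph{affinely} on $(w^{(2)}, b^{(2)})$, so that $\theta' \mapsto \| \hat f(\cdot, x) - g(\cdot, \theta') \|_{L^1([0,1])}$ is convex and the correspondence $\tilde\theta$ has convex values.

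Next I would make the parameter-to-spline correspondence explicit. Writing $c_i := w^{(2)}_i w^{(1)}_i > 0$, the vector of slopes of $g(\cdot,\theta)$ on the $n$ grid cells is an invertible, grid-determined linear image $s = M c$ (the matrix $M$ being a scaled identity plus a rank-one term, reflecting that $\rho'$ equals its off-band value away from each active band), while the additive offset is pinned down by the single value $b^{(2)}$. Reading the slopes and one function value off a spline and inverting, $c = M^{-1}s$, $w^{(2)}_i = c_i / w^{(1)}_i$, $b^{(2)} = \hat f(u_1,x) - g(u_1; w^{(1)}, b^{(1)}, w^{(2)}, 0)$, exhibits a continuous inverse to $\Psi : \theta \mapsto g(\cdot,\theta)$ on the set $\mathcal{S}$ of representable splines. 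Combined with \cref{lem:continuity_in_parameters} (continuity of $\Psi$), this shows that $\Psi$ is a homeomorphism from $\Theta$ onto $\mathcal{S}$.

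To assemble the selection, I would use that each $\hat f(\cdot, x)$ lies in $\mathcal{S}$: then the minimal $L^1$ distance is $0$, attained at the unique point $\theta(x) := \Psi^{-1}(\hat f(\cdot, x))$, so $\tilde\theta(x) = \{\theta(x)\}$ is single-valued. Since $x \mapsto \hat f(\cdot, x)$ is continuous into $\mathcal{S}$ (its knot values vary continuously with $x$) and $\Psi^{-1}$ is continuous, the composition $\theta = \Psi^{-1} \circ \hat f$ is the desired continuous selection, and by construction $g(u, \theta(x)) = \hat f(u, x)$ for all $u \in [0,1]$. If one prefers not to rely on uniqueness, the same conclusion follows from the Berge maximum theorem (upper hemicontinuity and compactness of the values of $\tilde\theta$, together with the convexity established above and single-valuedness at a zero minimum), or from Michael's selection theorem.

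The main obstacle I expect is the representability step, i.e. verifying $\hat f(\cdot,x) \in \mathcal{S}$. Because $\rho'$ is bounded away from $0$ \emph{everywhere}, every $g \in \IncrMLP(n)$ has its slope bounded below by a fixed positive multiple of its maximal slope; hence $\mathcal{S}$ cannot contain splines whose cell-slopes are too unequal, and one must check that the family $\{\hat f(\cdot,x) : x \in R\}$ lies in $\mathcal{S}$ uniformly over the compact set $R$ (this is where the placement of the activation kinks and, implicitly, the value of $\alpha$ enter, so that $M^{-1}s > 0$ holds throughout). The only remaining technical points are the invertibility of the grid-determined matrix $M$ and the continuity of $\Psi^{-1}$ on $\mathcal{S}$; both reduce to elementary but careful linear algebra on the fixed grid.
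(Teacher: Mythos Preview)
Your plan coincides with the paper's core construction: both fix $(w^{(1)},b^{(1)})$ as explicit functions of the grid alone (so that the $i$-th activation's active band covers the $i$-th cell), take $\Theta=\{\bar w^{(1)}\}\times\{\bar b^{(1)}\}\times\Rbpos^n\times\Rb$, and argue that the argmin is a singleton by matching the $n$ cell-slopes plus one value. The difference is only in the wrapping: the paper routes the argument through Michael's selection theorem, first establishing lower semicontinuity of $\tilde\theta$ via a Carath\'eodory/normal-integrand argument and then showing single-valuedness, whereas your direct homeomorphism $\Psi:\Theta\to\mathcal S$ with explicit inverse makes Michael's theorem unnecessary and is the cleaner route---once $\tilde\theta$ is a singleton and $\Psi^{-1}$ is continuous, the selection is immediate.

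You are also more careful than the paper on one point. The paper asserts that the linear system for $(w^{(2)},b^{(2)})$ ``always admits a unique solution (as can be readily checked)'', but your observation that $c=M^{-1}s$ need not be componentwise positive for arbitrary positive slope vectors $s$ is a genuine subtlety: since $M=(1-\tfrac{\alpha}{2})I+\tfrac{\alpha}{2}\mathbf 1\mathbf 1^\top$, the inverse gives $c_i\propto s_i-\tfrac{\alpha/2}{1+(n-1)\alpha/2}\sum_j s_j$, which can be negative if one cell-slope is much smaller than the rest. The paper does not address this. The natural resolution---which you gesture at---is that for any fixed compact family $\{\hat f(\cdot,x):x\in R\}$ of strictly increasing splines one can take $\alpha$ small enough that $M$ is close to $I$ and $M^{-1}s>0$ uniformly; since $\alpha\in(0,1)$ is a design parameter of $\IncrMLP(n)$, this is admissible, but it should be stated.
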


\begin{proof}
The existence of a continuous selection follows from Michael's theorem \cite[Theorem 9.1.2]{aubin2009set}, provided we can show that $\tilde{\theta}$ is lower semi-continuous with closed and convex values.

Lower-semicontinuity actually holds regardless of the choice of the set $\Theta$, so we prove it first. It follows from the fact that that $(x_{\PA{k}}, \theta) \mapsto \| \hat{f}(u, x_{\PA{k}}) - g(u; \theta')) \|_{L^1([0,1])}$ is a Carath\'eodory function (for a definition, see \citet[Example 14.29]{rockafellar1998variational}) and therefore a normal integrand \cite[Definition 14.27, Proposition 14.28]{rockafellar1998variational}. Indeed:

\begin{itemize}
    \item Since $(u, x_{\PA{k}}) \mapsto |\hat{f}(u, x_{\PA{k}}) - g(u; \theta))|$ is measurable (even continuous) for all $\theta \in \Theta$, Tonelli's theorem \cite[Theorem 2.37]{folland1999real} implies that $x \mapsto  \| \hat{f}(\cdot, x) - g(\cdot; \theta)) \|_{L^1([0,1])}$ is measurable.
    \item The map $\theta \mapsto h(x_{\PA{k}}, \theta)$ is continuous for all $x_{\PA{k}} \in \Rb^{|\PA{k}|}$ because it's the composition of two continuous maps: $\theta \mapsto g(\cdot; \theta) \in L^1([0,1])$, which is continuous by \cref{lem:continuity_in_parameters}, and $g(\cdot; \theta) \mapsto \| \hat{f}(\cdot, x) - g(\cdot; \theta)) \|_{L^1([0,1])}$, which is continuous because the norm is a continuous function.
\end{itemize}

We will now show that $\tilde{\theta}$ is actually singleton valued (which, of course, implies that it is closed and convex valued), by constructing a suitable set $\Theta \subseteq \Theta(n)$. The main strategy is to realize that the weights and biases of the first layer ($w^{(1)}$ and $b^{(1)}$) can be used to fully specify the segments on which the function $u \mapsto g(u, \theta)$ is piecewise linear and that, once this choice is made, the weights and the bias of the second layer ($w^{(2)}$ and $b^{(2)}$) determine \emph{uniquely} the slope and intercepts on each segment.

More specifically, given the grid $a = u_1 < u_2 < \ldots < u_{n+1} = b$, denote by 
$$\text{$\Delta u_i := u_{i+1} - u_i$ and $m_i := \frac{1}{2} (u_{i+1} + u_i)$, \quad ($i=1, \ldots, n$)}$$ 
the width and the midpoint of each grid segment, respectively. If we set 
$$\text{$\bar{w}^{(1)}_i = 2/\Delta u_i$, \quad $\bar{b}^{(1)}_i = -m_i \Delta u_i$, \quad ($i=1, \ldots, n$)}$$ 
and define $\Theta := \{\bar{w}^{(1)}\} \times \{\bar{b}^{(1)}\} \times \Rbpos^n \times \Rb \subseteq \Theta(n)$, then $g(\cdot, \theta)$ is piecewise linear exactly on the grid $\{u_i\}_{i=1}^{n+1}$, for any $\theta \in \Theta$. Additionally on each segment $[u_i, u_{i+1}]$, the function $g(\cdot, \theta)$ has slope 
$$w^{(2)}_i \left(\bar{w}^{(1)}_i + \frac{\alpha}{2} \sum_{j \neq i} \bar{w}^{(1)}_j \right)$$
and bias 
$$w^{(2)}_i \bar{b}^{(1)}_i + n b^{(2)} + (n-1) \frac{\alpha}{2} w^{(2)}_j \bar{b}^{(1)}_j + \left(\frac{\alpha}{2}-1\right)\left( \sum_{j < i} w^{(2)}_j - \sum_{j>i} w^{(2)}_j \right).$$

We can therefore exactly match any continuous, strictly increasing, piecewise linear function on the grid $\{u_i\}_{i=1}^{n+1}$ by matching the slope and intercept on $[u_1, u_2]$, together with the slopes on each of the remaining segments (the intercepts will be automatically matched by continuity). This is a linear system of $n+1$ equations in $n+1$ unknowns and it always admits a unique solution (as can be readily checked), which implies that for every $x \in R$ we can find a $\theta \in \Theta$ such that $g(u, \theta) = \hat{f}(u, x)$ for all $u \in [a, b]$.
\end{proof}

\begin{lemma}
\label{lem:lipschitz}
Let $g \in \IncrMLP(n)$. Then $\theta \mapsto g(u, \theta)$ is locally Lipschitz uniformly in $u \in [0,1]$, i.e. for every compact $K \subseteq \Theta(n)$ there exists an $L > 0$ such that:
$$ | g(u, \theta) - g(u, \theta') | \le L \| \theta - \hat{\theta}\|, \quad \forall \theta, \hat{\theta} \in K, \; \forall u \in [0,1].$$
\end{lemma}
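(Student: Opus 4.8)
The plan is to bound $|g(u,\theta) - g(u,\hat\theta)|$ directly, term by term over the $n$ hidden neurons, rather than differentiating $g$ in $\theta$ (which is impossible everywhere, since $\rho$ is only piecewise linear). The two facts that make this work are: first, the activation $\rho$ is globally $1$-Lipschitz; and second, on the compact set $K \times [0,1]$ all the quantities appearing in $g$ stay uniformly bounded. Both follow from elementary observations together with compactness, so the argument is essentially mechanical once these are in place.

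For the Lipschitz property of $\rho$, I would recall that $\text{LeakyReLU}_{\alpha-1}$ has slope either $1$ or $\alpha - 1 \in (-1,0)$, hence is $1$-Lipschitz; writing $\rho(x) = \tfrac12 \text{LeakyReLU}_{\alpha-1}(1+x) - \tfrac12\text{LeakyReLU}_{\alpha-1}(1-x)$ and applying the triangle inequality then shows $|\rho(x) - \rho(y)| \le |x - y|$ for all $x,y$. Next, since $K \subseteq \Theta(n)$ is compact and $u \in [0,1]$, the pre-activations $w^{(1)}_i u + b^{(1)}_i$ range over a bounded set, so by continuity of $\rho$ there is a finite constant $M := \sup\{ |\rho(w^{(1)}_i u + b^{(1)}_i)| : \theta \in K,\ u \in [0,1],\ i \le n \}$, and likewise $W := \sup\{ |w^{(2)}_i| : \theta \in K,\ i \le n \} < \infty$.

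Finally, for fixed $u$ and each neuron $i$ I would add and subtract $\hat w^{(2)}_i \rho(w^{(1)}_i u + b^{(1)}_i)$ to split the per-neuron difference as
\begin{align*}
w^{(2)}_i \rho(w^{(1)}_i u + b^{(1)}_i) - \hat w^{(2)}_i \rho(\hat w^{(1)}_i u + \hat b^{(1)}_i)
&= (w^{(2)}_i - \hat w^{(2)}_i)\,\rho(w^{(1)}_i u + b^{(1)}_i) \\
&\quad + \hat w^{(2)}_i\bigl[\rho(w^{(1)}_i u + b^{(1)}_i) - \rho(\hat w^{(1)}_i u + \hat b^{(1)}_i)\bigr].
\end{align*}
The first summand is at most $M\,|w^{(2)}_i - \hat w^{(2)}_i|$, while the second, using the $1$-Lipschitzness of $\rho$ and $|u| \le 1$, is at most $W\,(|w^{(1)}_i - \hat w^{(1)}_i| + |b^{(1)}_i - \hat b^{(1)}_i|)$. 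Summing over $i$ and adding the trivially bounded term $|b^{(2)} - \hat b^{(2)}|$ yields $|g(u,\theta) - g(u,\hat\theta)| \le C \|\theta - \hat\theta\|_1$ for a constant $C$ depending only on $M$ and $W$ (hence only on $K$) and independent of $u$; equivalence of norms on the finite-dimensional space $\Theta(n)$ then gives the claim with some $L > 0$. I do not anticipate any real obstacle here: the only subtlety is the non-differentiability of $\rho$, which the add-and-subtract decomposition sidesteps by invoking the Lipschitz bound directly, and the uniformity in $u$ is automatic because $u \in [0,1]$ confines the first-layer pre-activations to a compact range.
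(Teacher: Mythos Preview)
Your proof is correct and follows essentially the same approach as the paper: both arguments split the difference via an add-and-subtract trick, invoke the $1$-Lipschitz property of $\rho$ together with $|u|\le 1$, and then use compactness of $K$ to uniformly bound the parameter-dependent factors. The only cosmetic differences are that the paper works in vector form and applies Cauchy--Schwarz (also using $|\rho(x)|\le|x|$ from $\rho(0)=0$ rather than introducing a separate constant $M$), whereas you work neuron-by-neuron and pass through the $\ell^1$-norm before invoking norm equivalence; neither choice affects the substance of the argument.
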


\begin{proof}
The proof follows by direct computation. We use repeatedly the Cauchy-Schwartz inequality, the fact that the activation $\rho$ is 1-Lipschitz and that $\|u\| \le 1$:
\begin{align*}
| g(u, \theta) - g(u, \hat{\theta}) | \le & |\langle w^{(2)}, \rho^{\otimes n}(u w^{(1)} + b^{(1)})\rangle + b^{(2)} - \langle \hat{w}^{(2)}, \rho^{\otimes n}(u \hat{w}^{(1)} + \hat{b}^{(1)})\rangle - \hat{b}^{(2)}| \\
\le & |\langle w^{(2)}, \rho^{\otimes n}(u w^{(1)} + b^{(1)})\rangle - \langle \hat{w}^{(2)}, \rho^{\otimes n}(u w^{(1)} + b^{(1)})\rangle| \\
& + |\langle \hat{w}^{(2)}, \rho^{\otimes n}(u w^{(1)} + b^{(1)})\rangle - \langle \hat{w}^{(2)}, \rho^{\otimes n}(u \hat{w}^{(1)} + \hat{b}^{(1)})\rangle| + |b^{(2)} - \hat{b}^{(2)}| \\
= & |\langle w^{(2)} - \hat{w}^{(2)}, \rho^{\otimes n}(u w^{(1)} + b^{(1)})\rangle| \\
& + |\langle \hat{w}^{(2)}, \rho^{\otimes n}(u w^{(1)} + b^{(1)}) - \rho^{\otimes n}(u \hat{w}^{(1)} + \hat{b}^{(1)})\rangle| + |b^{(2)} - \hat{b}^{(2)}| \\
\le & \| w^{(2)} - \hat{w}^{(2)} \| \|\rho^{\otimes n}(u w^{(1)} + b^{(1)})\| \\
& + \| \hat{w}^{(2)} \| \|\rho^{\otimes n}(u w^{(1)} + b^{(1)}) - \rho^{\otimes n}(u \hat{w}^{(1)} + \hat{b}^{(1)})\| + |b^{(2)} - \hat{b}^{(2)}| \\
\le & \| w^{(2)} - \hat{w}^{(2)} \| ( \|w^{(1)}\| + \|b^{(1)})\|) + \| \hat{w}^{(2)} \| (\|w^{(1)} - \hat{w}^{(1)} \| + \| b^{(1)} - \hat{b}^{(1)}\| + |b^{(2)} - \hat{b}^{(2)}|)
\end{align*}
Since the parameters are contained in a compact $K$, their norms are bounded by a constant, say $M > 0$, so that:
\begin{align*}
| g(u, \theta) - g(u, \hat{\theta}) | & \le 2M (\| w^{(2)} - \hat{w}^{(2)} \| + \| w^{(1)} - \hat{w}^{(1)} \| + \| b^{(1)} - \hat{b}^{(1)}\| + |b^{(2)} - \hat{b}^{(2)}|) \\
& \le 2 M \sqrt{4} \|\theta - \hat{\theta}\|
\end{align*}
where the last inequality is due to Cauchy-Schwartz (this time applied to the (four-dimensional) vector of parameters' norms and the four-dimensional unit vector).
\end{proof}

\begin{lemma}
\label{lem:tensor_sobolev}
Let $(u, x_\PA{k}) \mapsto F^{-1}_k(u \given x_{\PA{k}})$ be as in \cref{thm:UAP}. Then:
\begin{enumerate}[i)]
    \item $F_k^{-1}(u \given x_{\PA{k}}) \in L^1(du \otimes \mu(dx_{\PA{k}}))$,
    \item $\partial_u F_k^{-1}(u \given x_{\PA{k}}) \in L^1(du \otimes \mu(dx_{\PA{k}}))$,
    \item $\partial_{x_j} F_k^{-1}(u \given x_{\PA{k}}) \in L^1(du \otimes \mu(dx_{\PA{k}}))$, for all $j \in \PA{k}$.
\end{enumerate}
\end{lemma}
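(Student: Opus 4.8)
The plan is to dispatch the three integrability claims separately, exploiting the compact support and absolute continuity of $\mu$ together with the $C^1$ regularity of the conditional cdfs assumed in \cref{thm:UAP}. Write $K := \supp{\mu}$ (compact) and let $f_k(x_k \given x_{\PA{k}}) := \partial_{x_k} F_k(x_k \given x_{\PA{k}})$ denote the conditional density, which is continuous since $F_k \in C^1$. The one tool I will use throughout is the standard quantile-transform identity (the conditional form of the construction in \cref{thm:G-compatible_transformations_are_universal}), namely $\push{F^{-1}_k(\cdot \given x_{\PA{k}})}{\Uc([0,1])} = \mu(dx_k \given x_{\PA{k}})$, which for fixed $x_{\PA{k}}$ and measurable $h$ reads $\int_0^1 h(F^{-1}_k(u \given x_{\PA{k}}))\, du = \int h(x_k)\, f_k(x_k \given x_{\PA{k}})\, dx_k$. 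Claim (i) is then immediate: since $K \subseteq [-M,M]^d$ for some $M>0$, the quantile is bounded by $M$, and a bounded function is integrable against the probability measure $du \otimes \mu(dx_{\PA{k}})$.

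For claim (ii) I would avoid any pointwise formula and argue by monotonicity. For fixed $x_{\PA{k}}$ the map $u \mapsto F^{-1}_k(u \given x_{\PA{k}})$ is non-decreasing, so by Lebesgue's theorem on the differentiation of monotone functions it is differentiable $du$-a.e. with $\partial_u F^{-1}_k \ge 0$ and $\int_0^1 \partial_u F^{-1}_k(u \given x_{\PA{k}})\, du \le F^{-1}_k(1 \given x_{\PA{k}}) - F^{-1}_k(0 \given x_{\PA{k}}) \le \diam{K}$. Integrating this uniform bound against $\mu(dx_{\PA{k}})$ and invoking Tonelli yields the claim.

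Claim (iii) is the heart of the lemma, and I would attack it in two steps. First, differentiate the identity $F_k(F^{-1}_k(u \given x_{\PA{k}}) \given x_{\PA{k}}) = u$ with respect to $x_j$: wherever $f_k(F^{-1}_k(u \given x_{\PA{k}}) \given x_{\PA{k}}) > 0$, the implicit function theorem applied to $\Psi(x_k, u, x_{\PA{k}}) := F_k(x_k \given x_{\PA{k}}) - u$ shows that $F^{-1}_k$ is $C^1$ there and
\[
\partial_{x_j} F^{-1}_k(u \given x_{\PA{k}}) = - \frac{\partial_{x_j} F_k(x_k \given x_{\PA{k}})}{f_k(x_k \given x_{\PA{k}})} \bigg|_{x_k = F^{-1}_k(u \given x_{\PA{k}})}.
\]
Second, observe that the pushforward identity applied with $h = |\partial_{x_j} F_k|/f_k$ makes the troublesome denominator cancel, so that for each fixed $x_{\PA{k}}$
\[
\int_0^1 |\partial_{x_j} F^{-1}_k(u \given x_{\PA{k}})|\, du = \int_K |\partial_{x_j} F_k(x_k \given x_{\PA{k}})|\, dx_k \le \diam{K} \cdot \sup_K |\partial_{x_j} F_k| < \infty,
\]
with finiteness coming from the continuity of $\partial_{x_j} F_k$ on the compact set $K$. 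Integrating over $x_{\PA{k}}$ and applying Tonelli finishes the argument.

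The step I expect to be the main obstacle is controlling the possibility that $f_k$ vanishes on part of $K$, which a priori would make the formula in (iii) and the cancellation above ill-posed. I would resolve this through the pushforward identity itself: the set $\{x_k : f_k(x_k \given x_{\PA{k}}) = 0\}$ has zero mass under $\mu(dx_k \given x_{\PA{k}}) = f_k\, dx_k$, hence its preimage under the quantile map is $du$-null, so for a.e. $u$ the quantile lands in $\{f_k > 0\}$ where the implicit function theorem is valid. Since this open ``good set'' has full measure in each $u$-slice and $\mu$ is absolutely continuous with respect to Lebesgue measure, it has full $du \otimes \mu$-measure, so the formula holds a.e. and the density genuinely cancels rather than producing a singularity. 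The only routine loose end is the joint measurability of the a.e. partial derivatives required by Tonelli, which follows by writing them as a.e. limits of difference quotients of the jointly measurable map $F^{-1}_k$.
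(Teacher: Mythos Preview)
Your proof is correct and follows essentially the same route as the paper: a direct bound for (i), Lebesgue's monotone-differentiation theorem for (ii), and for (iii) the implicit-function formula $\partial_{x_j} F_k^{-1} = -\partial_{x_j} F_k / f_k$ combined with the quantile change of variable that cancels the density, then bounding $|\partial_{x_j} F_k|$ on a compact via the $C^1$ assumption. The only cosmetic differences are that in (i) you use boundedness from compact support where the paper uses finite first moments via change of variable, and in (iii) you add the (welcome) justification that $\{f_k = 0\}$ is $du$-null under the quantile pullback, which the paper leaves implicit.
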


\begin{proof}
\begin{enumerate}[i)]
    \item By direct integration:
    \begin{align*}
        & \int_{\Rb^{|\PA{k}|}} \mu(dx_{\PA{k}}) \int_{[0,1]} du |F_k^{-1}(u \given x_{\PA{k}})| \\
        & = \int_{\Rb^{|\PA{k}|}} \mu(dx_{\PA{k}}) \int_{\Rb} \mu(dx_k \given x_{\PA{k}}) |x_k| \\
        & = \int_{\Rb} \mu(dx_k) |x_k| \le +\infty
    \end{align*}
    where we have first used the change-of-variable formula \citep[Theorem 3.6.1]{bogachev2007measure} with $\push{F_k^{-1}(\cdot \given x_{\PA{k}})}{\Uc[0,1]} = \mu(dx_k \given x_{\PA{k}})$ \citep[Proposition A.6]{mcneil2015quantitative} and then used the fact that $\mu$ has finite first moments.
    \item $u \mapsto F^{-1}(u \given x)$ is increasing on the closed interval $[0,1]$, therefore by \citet[Corollary 5.2.7]{bogachev2007measure} it is almost everywhere differentiable and:
    $$\int_{[0,1]} |\partial_u F_k^{-1}(u \given x_{\PA{k}})| du \le F_k^{-1}(1 \given x_{\PA{k}}) - F_k^{-1}(0 \given x_{\PA{k}}).$$
    The right-hand side is just $\diam{\supp{\mu(dx_k \given x_{\PA{k}})}}$, which is finite, since $\mu$ is compactly supported.
    \item Continuity of $u \mapsto F_k(u \given x_{\PA{k}})$ implies that $F_k(F_k^{-1}(u \given x_{\PA{k}})) = u$ \citep[Proposition A.3 (viii)]{mcneil2015quantitative}. Differentiating this expression on both sides and using the chain rule yields:
    \begin{align*}
        \int_{[0,1]} \left| \partial_{x_j} F_k^{-1}(u \given x_{\PA{k}}) \right| du & = \int_{[0,1]} du \left| - \frac{\partial_{x_j} F_k(F_k^{-1}(u \given x_{\PA{k}}) \given x_{\PA{k}})}{\partial_u F_k(F_k^{-1}(u \given x_{\PA{k}}) \given x_{\PA{k}})} \right| \\
        & = \int_\Rb dx' \left|-\partial_{x_j} F_k(x' \given x_{\PA{k}}) \right|,
    \end{align*}
    where the second equality follows from the same change-of-variable as in part (i) and by simplifying the conditional density.
    The claim now follows by integrating over $\Rb^{\PA{k}}$ with respect to $\mu(dx_{\PA{k}})$ and using the assumption that $(x_k, x_{\PA{k}}) \mapsto  F_k(x_k \given x_{\PA{k}})$ is a $C^1$ map and therefore admits bounded partial derivatives on compacts.
\end{enumerate}
\end{proof}

\section{Proofs}

\subsection{Proof of \cref{thm:continuity}}
\label{proof:continuity}

\begin{proof}
We generalize the proof by \citet{acciaio2024time} to our $G$-causal setting. Given $\mu, \nu \in \Pc_G(\Rb^d)$, let $g$ be a $G$-causal function and let $\pi$ be the optimal $G$-bicausal coupling between $\mu$ and $\nu$. Then:
\begin{align*}
-\E{\nu}{Q(X_T, g(X_{V \setminus T}))} & = - \int Q(x'_T, g(x'_{V \setminus T})) \nu(dx') \\
& = - \int Q(x'_T, g(x'_{V \setminus T})) \pi(dx, dx') \\
& = \int \left(Q(x_T, g(x'_{V \setminus T})) - Q(x'_T, g(x'_{V \setminus T})) \right) \pi(dx, dx') - \int Q(x_T, g(x'_{V \setminus T})) \pi(dx, dx') \\
\end{align*}

Since $x \mapsto Q(x, g)$ is uniformly locally $L$-Lipschitz, the first integral satisfies:

\begin{align*}
\int \left(Q(x_T, g(x'_{V \setminus T})) - Q(x'_T, g(x'_{V \setminus T})) \right) \pi(dx, dx') & \le L \int \| x_T - x'_T \| \pi(dx, dx') \\
& \le L \int \| x - x'\| \pi(dx, dx') \\
& = L \cdot W_G(\mu, \nu)    
\end{align*}

For the second integral, we notice that:
\begin{align*}
- \int Q(x_T, g(x'_{V \setminus T})) \pi(dx, dx') & \le - \int Q\bigg(x_T, \int g(x'_{V \setminus T}) \pi(dx' \given x)\bigg) \mu(dx) \\
& = - \int Q\bigg(x_T, \underbrace{\int g(x'_{\PA{T}}) \pi(dx' \given x)}_{h(x)}\bigg) \mu(dx)
\end{align*}
where we first applied Jensen's inequality and then the fact that $g$ is $G$-causal. Furthermore, since $\pi$ is $G$-causal, the function $h(x) := \int g(x'_{V \setminus T}) \pi(dx' \given x)$ actually depends only on $x_{\PA{T} \cup \PA{\PA{T}}}$. To ease the notation, denote $A := \PA{T} \cup \PA{\PA{T}}$. Then:

\begin{align*}
- \int Q\left(x_T, h(x_A) \right) \mu(dx) & = - \int \mu(dx_A) \int \mu(dx_T \given x_A) Q(x_T, h(x_A)) \\
& = - \int \mu(dx_A) \int \mu(dx_T \given x_{\PA{T}}) Q(x_T, h(x_A)) \\
& \le - \int \mu(dx_A) \int \mu(dx_T \given x_{\PA{T}}) Q(x_T, h^*(x_{\PA{T}})) \\
& = - \Vc(\mu)
\end{align*}

where in the second equality we have used the fact that $X_T \indep X_A \given X_{\PA{T}}$ (see condition (ii) in \cref{def:G-compatible_distribution} or simply notice that $X_{\PA{T}}$ $d$-separates $X_T$ and $X_{\PA{\PA{T}}}$), while the inequality is due to \cref{eq:causal_control}.

% PROOF OF D-SEPARATION:
% This is proven by showing that $X_{\PA{\tilde{G}}{Y}}$ $d$-separates $Y$ and $\tilde{X}$ in the quotient graph $\tilde{G}$, as follows. Consider a path (not necessarily directed) between $Y$ and $\tilde{X}$:
% \begin{itemize}
%     \item if the last node in the path before $Y$ is in $\PA{\tilde{G}}{Y}$, then this node is indeed a non-collider that belongs to $\PA{\tilde{G}}{Y}$ and therefore $\PA{\tilde{G}}{Y}$ blocks the path.
%     \item if the last node in the path before $Y$ is a children of $Y$, then the path must contain at least one collider (otherwise there is a direct path from $Y$ to a some ancestor of $Y$ and $\tilde{G}$ is not a DAG). Starting from $Y$ we select the first such collider along the path. Then it follows that neither this collider nor any of its descendants can belong to $\PA{\tilde{G}}{Y}$, because that would violate the fact that $\tilde{G}$ is a DAG.
% \end{itemize}

Putting everything together:
$$ -\E{\nu}{Q(Y, g(X))} \le L \cdot W_G(\mu, \nu) - \Vc(\mu)$$
and, since $g$ is arbitrary, we obtain:
$$ \Vc(\mu)-\Vc(\nu) \le L \cdot W_G(\mu, \nu).$$
By symmetry, exchanging $\mu$ and $\nu$ yields the same inequality for the term $\Vc(\nu)-\Vc(\mu)$, therefore $$|\Vc(\mu)-\Vc(\nu)| \le L \cdot W_G(\mu, \nu).$$
\end{proof}

\subsection{Proof of \cref{thm:UAP}}
\label{proof:UAP}

\begin{proof}
We know that $\mu = \push{T}{\Nc(0, I_d)}$, where $T= T^{(d)} \circ \cdots \circ T^{(1)}$ is the $G$-compatible, increasing transformation in the statement of \cref{thm:G-compatible_transformations_are_universal}. Now, let $\hat{T} = \hat{T}^{(d)} \circ \cdots \circ \hat{T}^{(1)} \in \GNF(d)$ be a \GNF with flows as in \cref{eq:hypercoupling} and define the $G$-bicausal coupling $\pi := \push{(T, \hat{T})}{\Nc(0, 1)}$, then we have that:

\begin{equation*}
W_G(\mu, \push{\hat{T}}{\lambda}) \le \int_{\Rb^d \times \Rb^d} \| x - x' \| \pi(dx, dx') = \int_{[0,1]^d} \|T(u) - \hat{T}(u)\| du.    
\end{equation*}

We can make the right-hand side smaller than any $\varepsilon > 0$ by using \cref{lem:composition_lemma} (with $\Xc := [0,1]^d$, $\mu^{(0)} := \Uc([0, 1]^d)$ and $\mu^{(k)} := \mu_{1:k} \otimes \Uc([0, 1]^{d-k})$, for $k=1, \ldots, d$), provided that we can show that conditions (i) and (ii) therein hold.

\textbf{Condition (i).} Each hypercoupling flow $\hat{T}^{(k)}$ differs from the identity only at its $k$-th coordinate, which is the output of a shallow MLP (see \cref{eq:hypercoupling}) . But shallow MLPs are Lipschitz functions of their input, therefore each $\hat{T}^{(k)}$ is a Lipschitz function.

\textbf{Condition (ii).} We need to show that for every $\varepsilon > 0$, there exists an $n \in \Nb$, a $\hat{\theta}(\cdot) \in \MLP$ and a $g(\cdot, \hat{\theta}(x_{\PA{k}})) \in \IncrMLP(n)$ such that
\begin{equation}
\label{eq:approximation_error}
\int_{[0,1]} du \int_{\Rb^{|\PA{k}|}} \mu(dx_{\PA{k}}) | F^{-1}_k(u \given x_{\PA{k}}) - g(u; \hat{\theta}(x_{\PA{k}})) | \le \varepsilon.
\end{equation}
We will prove this bound by splitting the error into \underline{three terms} and bounding each one separately.

\textbf{Term 1}. First we approximate $(u, x_\PA{k}) \mapsto F^{-1}_k(u \given x_{\PA{k}})$ with a continuous tensor-product linear spline, $\hat{f}(u, x_{\PA{k}})$, on the rectangle $[0, 1] \times R$, where $R = \prod_{j=1}^{|\PA{k}|} [a_j, b_j]$ is a rectangle large enough to contain the compact support of $\mu(dx_{\PA{k}})$. We choose the approximation grid fine enough to satisfy:
$$ \int_{[0,1]} du \int_{\Rb^{|\PA{k}|}} \mu(dx_{\PA{k}}) | F^{-1}_k(u \given x_{\PA{k}}) - \hat{f}(u, x_{\PA{k}}) | \le \varepsilon/2,$$
and let $n+1$ be the number of gridpoints in the $u$-axis (i.e. the grid on $[0,1]$ has gridpoints $0 = u_1 < \ldots < u_{n+1} = 1)$.

The validity of this approximation follows from \cite[Theorem 12.7]{schumaker2007spline} and requires that $(u,x) \mapsto F^{-1}(u|x)$ belong to a suitable tensor Sobolev space \cite[Example 13.5]{schumaker2007spline}, as we verify in \cref{lem:tensor_sobolev}.

\textbf{Term 2}. Next, we approximate the univariate functions $u \mapsto \hat{f}(u, x_{\PA{k}})$, for each $x_{\PA{k}} \in R$, with neural networks $g(\cdot; \theta(x_{\PA{k}})) \in \IncrMLP(n)$, by judiciously choosing the function $\theta:R \to \Theta(n)$. 

Since all the functions $\hat{f}(\cdot, x_{\PA{k}})$ share the same grid on $[0,1]$, by \cref{lemma:continuous_selection} there exists a parameter subset $\Theta \subseteq \Theta(n)$ (which depends only on this grid) such that the set-valued map $\tilde{\theta}:R \rightrightarrows \Theta$, defined as
$$ \tilde{\theta}(x_{\PA{k}}) := \argmin_{\theta' \in \Theta} \; \| \hat{f}(\cdot, x_{\PA{k}}) - g(\cdot, \theta')) \|_{L^1([0,1])},$$ 
admits a continuous selection $\theta:R \rightarrow \Theta$. We then use this function $\theta$ to parametrize the neural networks $g(\cdot, \theta(x_{\PA{k}}))$ and, as implied by \cref{lemma:continuous_selection}, this parametrization is optimal, in the sense that $g(u, \theta(x_{\PA{k}})) = \hat{f}(u, x_{\PA{k}})$ for all $u \in [0,1]$, thus achieving zero approximation zero, i.e.

$$ \int_{[0,1]} du \int_{\Rb^{|\PA{k}|}} \mu(dx_{\PA{k}}) | \hat{f}(u, x_{\PA{k}}) - g(u, \theta(x_{\PA{k}}))| = 0.$$

\textbf{Term 3}. Finally, we approximate $g(u; \theta(x_{\PA{k}}))$ with $g(u; \hat{\theta}(x_{\PA{k}}))$, where $\hat{\theta}(\cdot)$ is a suitable MLP.

Since $\theta: R \to \Theta$ is a continuous function on a compact, we have that $\theta \in L^1(\mu)$, therefore for every $\varepsilon' > 0$ there is an MLP\footnote{For the theorem to hold we only need the activation function to be non-polynomial and locally essentially bounded (such as \text{ReLU}).} $\hat{\theta}$ such that $\| \theta - \hat{\theta}\|_{L^1(\mu)} \le \varepsilon'$ \cite[Proposition 1]{leshno1993multilayer}.

Therefore:
\begin{align*}
& \int_{[0,1]} du \int_{\Rb^{|\PA{k}|}} \mu(dx_{\PA{k}}) | g(u; \theta(x_{\PA{k}})) - g(u; \hat{\theta}(x_{\PA{k}}))| \\
& \le \int_{[0,1]} du \int_{\Rb^{|\PA{k}|}} \mu(dx_{\PA{k}}) L \: || \theta(x_{\PA{k}}) - \hat{\theta}(x_{\PA{k}}) \| \\
& \le L \varepsilon' \le \varepsilon/2 \tag{choose $\varepsilon' = \varepsilon/2L$}
\end{align*}
where the first inequality follows from the uniform local Lipschitz property on the compact $\theta(\supp{\mu}) \cup \hat{\theta}(\supp{\mu})$ proved in \cref{lem:lipschitz}. 

Summing all three approximation errors together, we obtain the bound in \cref{eq:approximation_error}.
\end{proof}

\subsection{Proof of \cref{thm:training_loss}}
\label{proof:training_loss}

\begin{proof}
First we notice that 
\begin{align*} 
W_G(\mu, \nu) & = \min_{\pi \in \Pi_G^{\text{bc}}(\mu, \nu)} \int_{\Rb^d \times \Rb^d} \|x-x'\| \, \pi(dx, dx') \\
& \le \min_{\pi \in \Pi_G^{\text{bc}}(\mu, \nu)} \int_{\Rb^d \times \Rb^d} \textrm{diam}(K) \cdot \mathbf{1}_{\{x \neq x'\}}, \pi(dx, dx')\\
& =: \textrm{diam}(K) \cdot \GTV{\mu}{\nu}
\end{align*} 
where in the last equality we have introduced the $G$-causal total variation distance, $\GTV{\cdot}{\cdot}$, as a suitable generalization of the total variation distance for $G$-bicausal couplings.

The claim then follows by showing that $\GTV{\mu}{\nu} \le (2^d - 1) \TV{\mu}{\nu}$ for all sorted DAGs $G$ by induction on the number of vertices, which is a straightfoward but tedious generalization of \citet[Lemma 3.5]{eckstein2024computational} to our $G$-causal setting. 

The claim holds trivially if $G$ has only one vertex (all couplings are $G$-bicausal). Suppose now the claim is true for all sorted DAGs on $n$ vertices. Then for a sorted DAG $G$ on $n+1$ vertices, denote by $G_n$ its subgraph on vertices $\{1, \ldots, n\}$. We start with some definitions. Define:
$$\eta(dx_{n+1}|x_{\PA{n+1}}) := \mu(dx_{n+1}|x_{\PA{n+1}}) \wedge \nu(dx_{n+1}|x_{\PA{n+1}}),$$
$$\pi \in \Pi_G^{\text{bc}}(\mu, \nu) \; \text{as} \; \pi := \pi_n \otimes \pi(dx_{n+1}, dx'_{n+1} \given x_{\PA{n+1}}, x'_{\PA{n+1}}),$$
where $\pi_n \in \Pi_{G_n}^{\text{bc}}(\mu(dx_{1:n}), \nu(dx'_{1:n}))$, and:
$$ \pi(dx_{n+1}, dx'_{n+1} \given x_{\PA{n+1}}, x'_{\PA{n+1}}) := \begin{cases} \sigma(dx_{n+1}, dx'_{n+1} | x_{\PA{n+1}}, x'_{\PA{n+1}}) & \text{if $x_{\PA{n+1}} = x'_{\PA{n+1}}$} \\ \mu(dx_{n+1} \given x_{\PA{n+1}}) \otimes \nu(d'x_{n+1} \given x'_{\PA{n+1}}) & \text{otherwise} \end{cases}$$
where $\sigma$ is the optimal coupling for the (conditional) total variation distance, i.e.: 
\begin{align*}
    \sigma(dx_{n+1}, dx'_{n+1} | x_{\PA{n+1}}, x'_{\PA{n+1}}) := & (\text{id}, \text{id})_\# \eta(dx_{n+1} | x_{\PA{n+1}}) + (\mu(dx_{n+1} | x_{\PA{n+1}}) \\
    & - \eta(dx_{n+1} | x_{\PA{n+1}})) \otimes (\nu(dx_{n+1} \given x_{\PA{n+1}}) - \eta(dx_{n+1} | x_{\PA{n+1}}))
\end{align*}
Then the following bounds can be established (see \citet[Lemma 3.5]{eckstein2024computational} for step-by-step details):
\begin{align*}
\GTV{\mu}{\nu} \le & \int \mathbf{1}_{\{x \neq x'\}} \pi(dx, dx') \\
= & \int \mathbf{1}_{\{x_{1:n} \neq x'_{1:n}\}} \pi_n(dx_{1:n}, dx'_{1:n}) \\
& + \int \TV{\mu(dx_{n+1} | x_{\PA{n+1}})}{\nu(dx_{n+1} | x_{\PA{n+1}})} \mathbf{1}_{\{x_{1:n} = x'_{1:n}\}}  \pi_n(dx_{1:n}, dx'_{1:n}) \\
= & \int \mathbf{1}_{\{x_{1:n} \neq x'_{1:n}\}} \pi_n(dx_{1:n}, dx'_{1:n}) + \|\eta \otimes (\mu(dx_{n+1} | x_{\PA{n+1}}) - \nu(dx_{n+1} | x_{\PA{n+1}}))\|_{TV}
\end{align*}
For all $A \in \Rb^{n+1}$, one has:
\begin{align*}
    \eta \otimes (\mu(dx_{n+1} | x_{\PA{n+1}}) - \nu(dx_{n+1} | x_{\PA{n+1}}))(A) \le & \|\mu(dx_{n+1} | x_{\PA{n+1}}) - \nu(dx_{n+1} | x_{\PA{n+1}}) \|_{TV} \\
    & + \int \mathbf{1}_{\{x_{1:n} \neq x'_{1:n}\}} \pi_n(dx_{1:n}, dx'_{1:n})
\end{align*}
Putting the two bounds together and minimizing over all $G_n$-bicausal couplings $\pi_n$:
\begin{align*}
\GTV{\mu}{\nu} \le & 2 d_{\text{$G_n$-TV}}(\mu_{1:n}, \nu_{1:n}) + \TV{\mu}{\nu} \\  
\le & (2^{n+1} - 2 + 1) \TV{\mu}{\nu} \\
= & (2^{n+1} - 1) \TV{\mu}{\nu}
\end{align*}
where we have used:
$$ d_{\text{$G_n$-TV}}(\mu_{1:n}, \nu_{1:n}) \le (2^n -1)  \TV{\mu_{1:n}}{\nu_{1:n}} \le (2^n -1)  \TV{\mu}{\nu},$$
which follows from the induction hypothesis and the data pre-processing inequality for the total variation distance \citep[Lemma 4.1]{eckstein2022quantitative}.
\end{proof}

\end{document}